\documentclass[11pt]{article}

\usepackage[margin=1in]{geometry}
\usepackage{amsmath,amssymb,amsthm,mathtools}
\usepackage{booktabs}
\usepackage{graphicx}
\usepackage{natbib}
\usepackage[hidelinks]{hyperref}
\usepackage{enumitem}
\usepackage{float}

\newtheorem{theorem}{Theorem}
\newtheorem{corollary}{Corollary}
\newtheorem{lemma}{Lemma}

\title{Common-Center Geometry and Certified Radial Reconstruction for Energy-Form Full Conformal Regions}
\author{
Yiheng Feng\\
School of Management, Fudan University\\
Shanghai, China\\
\texttt{yhfeng25@m.fudan.edu.cn}
}
\date{}

\begin{document}
\maketitle

\begin{abstract}
This note studies the geometry of full conformal prediction (FullCP) regions generated by an empirical energy-form pairwise score.  Candidate-score convexity alone does not guarantee connected FullCP regions, even when the candidate score is an empirical average of a loss convex in its first argument.  Direct expansion of the leave-one-out scores shows that each training-point comparison for the energy-form score is exactly a pairwise-dissimilarity sublevel condition.  Under symmetry, a constant diagonal, a diagonal lower bound, and attainment of the associated Fr\'echet-type objective, every comparison region contains a common minimizer; when the comparison regions are convex, the nontrivial exact conformal region is therefore star-shaped about that same point.  For power distances $\rho_\beta(x,y)=\|x-y\|^\beta$, this deterministic geometry holds for $\beta\ge1$, while the conventional energy score is strictly proper for $0<\beta<2$.  In the univariate $\beta=1$ specialization, every nontrivial empirical-CRPS FullCP region is a nonempty closed interval, possibly $\mathbb R$ in the $m=1$ degeneracy.  On the unconditional reconstruction range $1<\beta<2$ and $m\ge2$, explicit data-checkable derivative bounds yield Lipschitz control of the comparison-set radial exits and hence of the exact conformal radial function.  These score-specific bounds permit existing directional root-search ideas and classical Lipschitz-extension machinery to yield certified inner and outer radial envelopes with width at most $\delta+2\widetilde Lh_{\mathcal U}$ and corresponding same-ray Hausdorff guarantees.  An analytic two-dimensional example shows why retaining star-shaped but nonconvex geometry can matter.  A staged two-dimensional numerical study finds modest but systematic tightening of the generic certificate and frequent robust witnesses of nonconvexity, with detected radial departures typically small in normalized magnitude.  The resulting reconstruction perspective is intended for low-dimensional multivariate outputs rather than high-dimensional scaling or runtime improvement.
\end{abstract}

\section{Problem, Positioning, and Scope}

A multivariate full conformal prediction (FullCP) set is defined by an exact candidate-wise comparison rule, but its direct representation can be less transparent than the scalar acceptance test from which it is built.  Under the standard exchangeability framework, with i.i.d. sampling as a canonical special case, conformal prediction provides its usual finite-sample marginal validity \citep{vovk_gammerman_shafer_2005_alrw,shafer_vovk_2008_tutorial}.  The present note considers a particular empirical score family whose algebra exposes additional structure.  The organizing principle is score geometry first: we derive the candidate comparison sets from the score itself, identify conditions under which they share a center, and only then pass to a directional representation of the resulting exact prediction region.  The conclusions are deliberately score-specific.  They do not assert generic star-shapedness of full-conformal regions and do not propose a generic certified full-conformal procedure.

Root-finding approaches for conformal sets, including rootCP terminology and one-dimensional bisection or root searches, precede the present development \citep{ndiaye_takeuchi_2023_rootfinding}.  Multi-output conformal reconstruction from finitely many search directions also precedes it; in that directional framework, star-shapedness is imposed as a structural condition and sampled boundary points may be fit by convex objects such as ellipses or convex hulls \citep{johnstone_ndiaye_2025_multioutput}.  The distinction here is that, for the empirical energy-form pairwise score below, the common-center comparison-set structure is derived exactly before any directional approximation is introduced.

Power distances connect this score geometry to familiar proper-scoring-rule examples.  For $\rho_\beta(x,y)=\|x-y\|^\beta$, the conventional energy score is strictly proper on the appropriate finite-moment class for $0<\beta<2$, with the $\beta=2$ endpoint losing strictness; in one dimension, the $\beta=1$ specialization gives the usual energy representation of CRPS \citep{gneiting_raftery_2007_scoring}.  We use the loss-oriented sign convention for scoring rules; \citet{gneiting_raftery_2007_scoring} use the equivalent reward-oriented convention with the opposite sign.  These classical scoring facts are used here to delimit the overlap between a distributional scoring interpretation and the deterministic geometric argument.  CRPS is therefore a motivating specialization, not a separate scoring-rule result.

Recent work by \citet[Sections~8.1 and~8.3]{toccaceli_2026_crps} studies the same fixed-bin leave-one-out empirical-CRPS transductive construction considered in the univariate $\beta=1$ specialization below, reports a single connected interval in every numerical experiment, and explicitly conjectures that the transductive prediction set remains connected whenever the candidate score is convex.  Section~\ref{sec:score-geometry} refutes this broad convexity-only sufficiency claim, including within the empirical-average convex-loss class discussed there, while Corollary~\ref{cor:power} proves the single-interval property for the empirical-CRPS construction itself.  The positive result identifies a stronger score-specific mechanism: for empirical energy-form scores, the leave-one-out comparisons collapse to pairwise-dissimilarity sublevel inequalities sharing a Fr\'echet-type center.  Thus we do not claim that the use of CRPS or energy scoring in conformal inference is new; the contribution is the resulting comparison geometry and, in the regular power-distance range, its explicit radial regularity.  Our result concerns the deterministic geometry of this fixed-bin construction and does not address the separate exchangeability or coverage questions that may arise from response-dependent selection of the partition itself.

The contributions are threefold:
\begin{itemize}[leftmargin=2.2em,itemsep=0.35em,topsep=0.4em]
\item \textbf{Convexity boundary and score-specific geometry.} Candidate-score convexity alone does not ensure connected FullCP sets, even for empirical averages of losses convex in the candidate argument.  For empirical energy-form pairwise FullCP, however, the comparison-score differences reduce exactly to pairwise-dissimilarity sublevel sets; under the stated diagonal assumptions, any attained Fr\'echet-type minimizer is common to all comparison regions, so convex comparison regions yield an exact common-center star-shaped prediction region.  For power distances this deterministic geometry holds for $\beta\ge1$.
\item \textbf{Score-derived regularity enabling certified reconstruction.} For power distances $1<\beta<2$, we derive explicit data-dependent Lipschitz bounds for the radial exits of the score-induced comparison sets.  Combined with the exact radial order-statistic representation, these score-specific bounds provide the regularity needed to apply existing directional root-search ideas and classical McShane--Whitney extension machinery with deterministic inner and outer radial certificates and corresponding Hausdorff error control.
\item \textbf{Certificate sharpening.} Comparison-set-specific focus bounds sharpen the generic radial certificate, with deterministic ordering $\widetilde L\le L_{\rm old}$.  A staged 160-cloud numerical study with a protocol-frozen expanded phase evaluates certificate tightness and robust nonconvexity detection over fixed two-dimensional designs.
\end{itemize}

A companion Lean~4 development machine-checks the convex-score FullCP counterexample, the energy-form score-difference identity, the common-center and strict vote-threshold geometry, and the exact univariate fixed-bin empirical-CRPS interval theorem; the later multivariate radial-reconstruction and regularity results, adaptive-bin questions, and exchangeability or coverage questions are outside its present scope.

Directional root search and McShane--Whitney extension are therefore used as inherited machinery; the manuscript-specific reconstruction contribution is the score-derived common-center structure and explicit computable radial regularity that make those tools certifiable in this setting.

Two additional bodies of prior geometry are used only in their classical roles.  Radius-$R$ supporting-ball formulations of strongly convex sets supply a standard language for the local supporting balls derived below \citep{balashov_golubev_2012_projection}; for non-singleton comparison sets, the score-specific radii $\widetilde M_j/\widetilde\lambda_j$ and radial factor $2\widetilde M_j/\widetilde\lambda_j$ are derived here rather than attributed to that source.  Likewise, the classical McShane--Whitney extremal Lipschitz-extension formulas provide the extension template \citep{arnau_calabuig_erdogan_sanchezperez_2023}; we apply those formulas after deriving a score-specific Lipschitz constant and record the resulting interval-valued and same-ray Hausdorff consequences explicitly.

Certified lower/full/upper approximations to FullCP also exist in single-task and multi-task settings, together with volume- or symmetric-difference-based notions of tightness \citep{razafindrakoto_celisse_lacaille_2026a,razafindrakoto_celisse_lacaille_2026b}.  The certificate studied here has a different, narrower scope: it is tied to the power-distance score geometry, is radial about a derived common center, and controls radial and Hausdorff discrepancies while allowing the exact set to remain nonconvex.  No runtime superiority is claimed.  Directional covering requirements confine the intended practical regime to low-dimensional multivariate outputs.

Section~\ref{sec:score-geometry} first shows that candidate-score convexity alone is insufficient for connected FullCP geometry, and then gives the exact energy-form score-comparison identity, common-center argument, strict conformal threshold conversion, and star-shaped aggregation.  Section~\ref{sec:radial-interface} specializes the geometry to power distances and introduces the radial order-statistic interface.  Section~\ref{sec:certified-reconstruction} develops the explicit regularity constants and finite-direction certificate for $1<\beta<2$.  Section~\ref{sec:nonconvexity} gives an analytic nonconvex example and its convexification discrepancy, followed by the staged numerical assessment in Section~\ref{sec:empirical_geometry}.  Section~\ref{sec:limitations} records the directional covering burden and endpoint limitations, and Section~\ref{sec:discussion} summarizes the scope.  Appendices A--C contain the $\beta=1$ segment pathology, a three-point robustness witness, and expanded technical calculations.

\section{Exact Score-Induced Full-Conformal Geometry}
\label{sec:score-geometry}

Let $m\ge1$, $d\ge1$, and let $y_1,\ldots,y_m\in\mathbb R^d$.  For a function
\[
\rho:\mathbb R^d\times\mathbb R^d\to\mathbb R
\]
and an empirical distribution
\[
\widehat F=\frac1m\sum_{r=1}^m\delta_{z_r},
\]
define the energy-form pairwise score
\[
S_\rho(\widehat F,y)
=
\frac1m\sum_{r=1}^m\rho(z_r,y)
-
\frac1{2m^2}\sum_{r,s=1}^m\rho(z_r,z_s).
\]
The second term uses the empirical V-statistic normalization $1/(2m^2)$; no off-diagonal normalization is used.

For a candidate $t\in\mathbb R^d$, augment the observed responses by
\[
x_i(t)=y_i,\qquad i=1,\ldots,m,
\qquad
x_{m+1}(t)=t.
\]
For $r=1,\ldots,m+1$, let
\[
\widehat F_{-r,t}
=
\frac1m
\sum_{\substack{s=1\\s\ne r}}^{m+1}
\delta_{x_s(t)},
\qquad
\alpha_r(t)
=
S_\rho\!\left(\widehat F_{-r,t},x_r(t)\right),
\]
and write
\[
\alpha(t)=\alpha_{m+1}(t).
\]
For $j=1,\ldots,m$, define the comparison region
\[
I_j=\{t:\alpha_j(t)\ge\alpha(t)\}.
\]
Thus score ties are accepted as comparison votes.  Let
\[
N(t)=\sum_{j=1}^m\mathbf 1\{t\in I_j\},
\qquad
p(t)=\frac{1+N(t)}{m+1},
\]
and use throughout the strict conformal convention
\[
\Gamma^\varepsilon=\{t:p(t)>\varepsilon\}.
\]
For $0\le\varepsilon<1$, set
\[
k=\lfloor(m+1)\varepsilon\rfloor.
\]
Thus the exact endpoint regimes are
\[
k=0
\quad\left(0\le\varepsilon<\frac1{m+1}\right)
\Longrightarrow
\Gamma^\varepsilon=\mathbb R^d,
\]
while $1\le k\le m$ is the nontrivial vote-threshold regime; at $\varepsilon=1$, the strict convention gives $\Gamma^1=\varnothing$.

\paragraph{Candidate-score convexity is not sufficient.}
To see why the score-specific structure below is needed, temporarily replace the energy-form score by the generic reference-sample-symmetric nonconformity rule
\[
A(z;Z)
=
\frac1{|Z|}
\sum_{w\in Z}|z-w^2|
=
\frac1{|Z|}
\sum_{w\in Z}\ell(z,w),
\qquad
\ell(u,v)=|u-v^2|.
\]
The rule is symmetric in the reference multiset $Z$, and $\ell(\,\cdot\,,v)$ is convex for every fixed $v$.

Take $m=2$ and $y_1=y_2=2$.  For a candidate $t$, its score is
\[
\widetilde\alpha(t)
=
A(t;\{2,2\})
=
|t-4|,
\]
which is convex, whereas the two training scores coincide:
\[
\widetilde\alpha_1(t)
=
\widetilde\alpha_2(t)
=
A(2;\{2,t\})
=
1+\frac12|2-t^2|.
\]
A direct piecewise calculation gives
\[
\widetilde\alpha_j(t)\ge\widetilde\alpha(t)
\quad\Longleftrightarrow\quad
t\le-4
\quad\text{or}\quad
t\ge2.
\]
Thus the corresponding full-conformal $p$-value is
\[
\widetilde p(t)
=
\begin{cases}
1,
&
t\in(-\infty,-4]\cup[2,\infty),\\[1mm]
1/3,
&
-4<t<2.
\end{cases}
\]
At $\varepsilon=1/2$, using the strict convention
$\widetilde p(t)>\varepsilon$,
\[
\widetilde\Gamma^{1/2}
=
(-\infty,-4]\cup[2,\infty),
\]
which is disconnected.  Equality holds at $t=-4$ and $t=2$, so both endpoints are included because comparison votes use $\ge$.

This example is not asserted to arise from a proper distributional scoring rule.  Its role is narrower: candidate-score convexity is not sufficient for connected FullCP regions, even within the empirical-average form with a loss convex in its candidate argument.

The following result isolates the stronger algebraic and geometric structure that is sufficient for the energy-form score.

\begin{theorem}[Exact common-center geometry for empirical energy-form FullCP]
\label{thm:common-center}
Let $m\ge1$, $d\ge1$, and $y_1,\ldots,y_m\in\mathbb R^d$, with scores defined above.  For
\[
d_j=\sum_{i\ne j}\rho(y_j,y_i),
\]
assume first that $\rho$ is symmetric and has constant diagonal
\[
\rho(x,x)=q.
\]
Then, for every $j=1,\ldots,m$ and every candidate $t$,
\[
\alpha_j(t)-\alpha(t)
=
\frac{m+1}{m^2}
\left[
 d_j-\sum_{i\ne j}\rho(t,y_i)
\right].
\]
Consequently,
\[
I_j
=
\left\{
 t:\sum_{i\ne j}\rho(t,y_i)\le d_j
\right\}.
\]

Now additionally assume
\[
\rho(x,y)\ge q
\qquad\forall x,y,
\]
and that
\[
F(t)=\sum_{i=1}^m\rho(t,y_i)
\]
attains a global minimizer
\[
c\in\arg\min_t F(t).
\]
Then
\[
c\in I_j
\qquad\forall j.
\]
If, additionally, every $I_j$ is convex, then for every $0\le\varepsilon<1$ with
\[
1\le k=\lfloor(m+1)\varepsilon\rfloor\le m,
\]
the exact conformal prediction region
\[
\Gamma^\varepsilon
=
\{t:p(t)>\varepsilon\}
=
\{t:N(t)\ge k\}
\]
is star-shaped about $c$.

The endpoint conventions are
\[
k=0\Longrightarrow\Gamma^\varepsilon=\mathbb R^d,
\qquad
\varepsilon=1\Longrightarrow\Gamma^1=\varnothing.
\]
Score ties $\alpha_j(t)=\alpha(t)$ are included in $I_j$ and count as votes.
\end{theorem}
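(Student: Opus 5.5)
The plan is to prove the three claims in order: the score-difference identity, then the common-center membership, then star-shapedness. For the identity, write out both leave-one-out scores. With $x_{m+1}=t$, the reference sample for $\alpha(t)=\alpha_{m+1}(t)$ is $\{y_1,\dots,y_m\}$, so
\[
\alpha(t)=\frac1m\sum_{i=1}^m\rho(y_i,t)-\frac1{2m^2}\sum_{r,s\le m}\rho(y_r,y_s).
\]
For $\alpha_j(t)$ the reference sample is $\{y_i\}_{i\ne j}\cup\{t\}$. Its linear term is $\frac1m[\sum_{i\ne j}\rho(y_i,y_j)+\rho(t,y_j)]=\frac1m[d_j+\rho(t,y_j)]$. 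For the quadratic V-statistic over this sample, expand $\sum_{r,s}$ into three kinds of terms: pairs inside $\{y_i\}_{i\ne j}$, cross pairs involving $t$ (which give $2\sum_{i\ne j}\rho(t,y_i)$ by symmetry), and the diagonal $\rho(t,t)=q$. Compare this with the full-sample double sum: the pairs inside $\{y_i\}_{i\ne j}$ equal $\sum_{r,s\le m}\rho(y_r,y_s)-2d_j-q$, using symmetry and $\rho(y_j,y_j)=q$. The two diagonal $q$'s cancel. Subtracting, the $\rho(t,y_j)$ terms cancel between the linear parts, and what remains is
\[
\frac1m d_j-\frac1m\sum_{i\ne j}\rho(t,y_i)+\frac1{m^2}d_j-\frac1{m^2}\sum_{i\ne j}\rho(t,y_i),
\]
which is the claimed factor $(m+1)/m^2$. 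The bookkeeping here (signs, and which diagonal terms cancel) is the only delicate point, and I would verify it once carefully. The description of $I_j$ then follows immediately, since $(m+1)/m^2>0$.

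For the common center, I would show $\sum_{i\ne j}\rho(c,y_i)\le d_j$. Write $\sum_{i\ne j}\rho(c,y_i)=F(c)-\rho(c,y_j)$. Then $F(c)\le F(y_j)=d_j+q$ by minimality, and $\rho(c,y_j)\ge q$ by the diagonal lower bound. Hence $\sum_{i\ne j}\rho(c,y_i)\le d_j+q-q=d_j$, so $c\in I_j$ for every $j$.

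For star-shapedness, first translate the strict threshold: $p(t)>\varepsilon$ iff $1+N(t)>(m+1)\varepsilon$. Since $N(t)$ is an integer, this holds iff $N(t)\ge\lfloor(m+1)\varepsilon\rfloor=k$; this is checked separately for integer and non-integer $(m+1)\varepsilon$. So $\Gamma^\varepsilon=\bigcup_{|J|=k}\bigcap_{j\in J}I_j$.

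Now take $t\in\Gamma^\varepsilon$ and let $J$ be the set of at least $k$ indices with $t\in I_j$. Each such $I_j$ is convex and contains both $c$ and $t$, so it contains the segment $[c,t]$. Hence every point of the segment lies in at least $k$ of the $I_j$, i.e. $[c,t]\subseteq\Gamma^\varepsilon$. Because $k\ge1$, the point $c$ itself lies in $\Gamma^\varepsilon$ (indeed $N(c)=m\ge k$), so the region is nonempty and star-shaped about $c$.

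The endpoint conventions are direct. If $k=0$, then $p\ge 1/(m+1)>\varepsilon$ everywhere, so $\Gamma^\varepsilon=\mathbb R^d$. If $\varepsilon=1$, then $p\le1$ gives $\Gamma^1=\varnothing$. Ties are handled by the definition of $I_j$ with $\ge$. I expect no real obstacle beyond the algebraic expansion in the first step.
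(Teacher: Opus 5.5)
Your proposal is correct and follows the paper's proof essentially step for step. It uses the same split of the V-statistics into the block over $\{y_i\}_{i\ne j}$, the cross terms and the diagonal; the same bound $F(c)\le F(y_j)=d_j+q$ combined with $\rho(c,y_j)\ge q$; and the same integer-floor conversion of the strict threshold followed by the segment argument. Your explicit note that $N(c)=m\ge k$ places $c$ itself in $\Gamma^\varepsilon$ is a small detail the paper leaves implicit.
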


\begin{proof}
Fix $j$.  Put
\[
B_j=\sum_{\substack{r,s=1\\r,s\ne j}}^m\rho(y_r,y_s),
\qquad
T_j(t)=\sum_{i\ne j}\rho(t,y_i).
\]
The candidate score, obtained by leaving $t$ out of the augmented sample, is
\[
\alpha(t)
=
\frac1m\left\{\rho(t,y_j)+T_j(t)\right\}
-
\frac1{2m^2}\left\{B_j+2d_j+q\right\},
\]
where symmetry and the constant diagonal were used to decompose the pairwise V-statistic over $y_1,\ldots,y_m$.  Leaving $y_j$ out instead gives an empirical distribution containing $t$ and the $m-1$ remaining training points, hence
\[
\alpha_j(t)
=
\frac1m\left\{\rho(t,y_j)+d_j\right\}
-
\frac1{2m^2}\left\{B_j+2T_j(t)+q\right\}.
\]
Subtracting, without changing the $1/(2m^2)$ normalization, yields
\begin{align*}
\alpha_j(t)-\alpha(t)
&=
\frac1m\{d_j-T_j(t)\}
+
\frac1{m^2}\{d_j-T_j(t)\}\\
&=
\frac{m+1}{m^2}
\left[d_j-\sum_{i\ne j}\rho(t,y_i)\right].
\end{align*}
The stated form of $I_j$ follows because the prefactor is positive and equality is included.

For the common-center claim, global minimality of $c$ gives, for every $j$,
\[
F(c)\le F(y_j)=d_j+q,
\]
while the diagonal lower bound gives explicitly
\[
\rho(c,y_j)\ge q.
\]
Therefore
\[
\sum_{i\ne j}\rho(c,y_i)
=
F(c)-\rho(c,y_j)
\le
(d_j+q)-q
=d_j,
\]
so $c\in I_j$ for all $j$.

It remains to translate the strict conformal threshold.  For $0\le\varepsilon<1$,
\[
p(t)>\varepsilon
\quad\Longleftrightarrow\quad
1+N(t)>(m+1)\varepsilon.
\]
Because $N(t)$ is integer, this is equivalent to
\[
N(t)\ge\lfloor(m+1)\varepsilon\rfloor=k.
\]
If $k=0$, every $t$ is accepted, so $\Gamma^\varepsilon=\mathbb R^d$.  If $1\le k\le m$ and $t\in\Gamma^\varepsilon$, then $t$ belongs to at least $k$ of the comparison sets.  Those same comparison sets contain $c$.  Convexity therefore places every point on the segment $[c,t]$ in each of those $k$ sets, so every point of $[c,t]$ receives at least $k$ votes.  Hence $[c,t]\subseteq\Gamma^\varepsilon$, proving star-shapedness about $c$.  Finally, $p(t)\le1$ for all $t$, so the strict convention gives $\Gamma^1=\varnothing$.
\end{proof}

The theorem separates three logically distinct ingredients.  Symmetry and a constant diagonal suffice for the exact comparison identity.  The diagonal lower bound and attainment of $F$ give a common point of all comparison regions.  Convexity of the individual comparison regions is needed only for the final segment argument.  The preceding counterexample shows that convexity of the candidate score itself is not a substitute for this comparison-set structure.  The resulting $k$-coverage set can be star-shaped without being convex; Section~\ref{sec:nonconvexity} gives an explicit analytic example.

\section{Power Distances and the Radial Interface}
\label{sec:radial-interface}

Consider now the power-distance dissimilarity
\[
\rho_\beta(x,y)=\|x-y\|^\beta.
\]
This specialization simultaneously supplies a concrete deterministic geometry and, on a narrower range, the conventional energy-score interpretation.

\begin{corollary}[Power-distance specialization]
\label{cor:power}
For $\beta\ge1$, the abstract assumptions required for the deterministic geometry are satisfied with $q=0$.  The Fr\'echet objective
\[
F_\beta(t)=\sum_{i=1}^m\|t-y_i\|^\beta
\]
is coercive and therefore attains a global minimum, and each
\[
G_j(t)=\sum_{i\ne j}\|t-y_i\|^\beta
\]
is convex.  Thus
\[
I_j=\{t:G_j(t)\le d_j\},
\qquad
d_j=\sum_{i\ne j}\|y_j-y_i\|^\beta,
\]
all $I_j$ contain any chosen minimizer $c$ of $F_\beta$, and the conclusions of Theorem~\ref{thm:common-center} apply.

For $m=1$,
\[
I_1=\mathbb R^d,
\]
so finite-radius reconstruction is degenerate.  For $m\ge2$, each power-distance comparison set is bounded and has finite radial exits.

Under the conventional distributional energy-score interpretation, define
\[
\mathcal P_\beta
=
\left\{
P:\int\|x\|^\beta\,dP(x)<\infty
\right\}.
\]
On the appropriate finite-moment class, the conventional power-distance energy score is strictly proper for
\[
0<\beta<2.
\]
Hence the overlap between strict propriety and the deterministic star-shaped geometry is
\[
1\le\beta<2.
\]
At $\beta=2$, the score is proper but not strictly proper and its expected score depends only on the forecast mean.  For $\beta>2$, the Euclidean distance-power energy-form score is not proper.

For $d=1$ and $\beta=1$, the score is exactly the fixed-bin empirical-CRPS score used by \citet[Section~8.1]{toccaceli_2026_crps}.  Write the ordered observations as
\[
y_{(1)}\le\cdots\le y_{(m)}
\]
and define the full empirical median set
\[
\mathcal M
=
\arg\min_t F_1(t)
=
\left[
y_{(\lceil m/2\rceil)},
y_{(\lfloor m/2\rfloor+1)}
\right].
\]
Every point of $\mathcal M$ belongs to every comparison region.  In the nontrivial threshold regime $1\le k\le m$, every corresponding FullCP region is therefore a nonempty closed interval containing $\mathcal M$.  For $m\ge2$ this interval is compact, while for $m=1$ it is $\mathbb R$.  In particular, this proves the single-interval property reported for Toccaceli's fixed-bin empirical-CRPS construction and settles the motivating CRPS case of the connectedness question.  The scoring-rule facts and the CRPS/energy representation are classical \citep{gneiting_raftery_2007_scoring}; the interval conclusion follows from the deterministic comparison geometry above and does not by itself extend to proper distributional scoring rules in general.
\end{corollary}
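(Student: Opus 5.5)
The plan is to check the hypotheses of Theorem~\ref{thm:common-center} one by one for $\rho_\beta$, then handle boundedness, the scoring-rule facts, and the univariate CRPS case separately.

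\textbf{Hypotheses of the theorem.} Symmetry of $\|x-y\|^\beta$ is immediate. The diagonal is constant with $q=0$, and the lower bound $\rho_\beta\ge0=q$ holds trivially. For attainment, I will argue that $F_\beta$ is continuous and coercive: $F_\beta(t)\ge\|t-y_1\|^\beta\to\infty$ as $\|t\|\to\infty$. So the sublevel set $\{F_\beta\le F_\beta(y_1)\}$ is compact, and Weierstrass gives a minimizer $c$.

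\textbf{Convexity of $I_j$.} For $\beta\ge1$, the map $u\mapsto u^\beta$ is convex and nondecreasing on $[0,\infty)$, and $t\mapsto\|t-y_i\|$ is convex. Their composition is therefore convex, so $G_j$ is convex as a finite sum. Hence $I_j=\{G_j\le d_j\}$ is a convex sublevel set. The identity $I_j=\{G_j\le d_j\}$ and the inclusion $c\in I_j$ both come straight from the theorem, and star-shapedness then follows from its final clause.

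\textbf{The cases $m=1$ and $m\ge2$.} For $m=1$ the sum over $i\ne j$ is empty, so $G_1\equiv0=d_1$ and $I_1=\mathbb R^d$. For $m\ge2$, the set $I_j$ lies inside $\{\|t-y_i\|^\beta\le d_j\}$ for any single $i\ne j$, which is a ball. So $I_j$ is bounded. It is also closed and convex and contains $c$, so every ray from $c$ leaves it at a finite radius.

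\textbf{Scoring-rule statements.} These are cited rather than proved. Strict propriety on $\mathcal P_\beta$ for $0<\beta<2$ and the CRPS energy representation come from \citet{gneiting_raftery_2007_scoring}. At $\beta=2$, expanding $E\|X-y\|^2-\tfrac12E\|X-X'\|^2=\|EX-y\|^2$ shows that the expected score depends only on the forecast mean. This gives propriety without strictness.

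\textbf{Univariate CRPS case.} Here $F_1(t)=\sum_i|t-y_i|$. Its minimizer set is the classical median interval $\mathcal M$, which I will obtain from the subgradient count of points on either side of $t$. Every $c\in\mathcal M$ is a valid choice in the theorem, so every $c\in\mathcal M$ lies in every $I_j$. Each $I_j$ is a closed interval, being a convex sublevel set of a continuous function.

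\textbf{Main obstacle.} The star-shaped union $\Gamma^\varepsilon=\{N\ge k\}$ is star-shaped about each point of $\mathcal M$. In one dimension star-shaped means an interval, and $\mathcal M\subseteq\Gamma^\varepsilon$ since each point of $\mathcal M$ has $N=m\ge k$. Closedness follows because $\{N\ge k\}$ is a finite union of intersections of the closed sets $I_j$, and it is nonempty. It is compact when $m\ge2$, since each $I_j$ is bounded, and it equals $\mathbb R$ when $m=1$. Beyond this step the only delicate points are notational: keeping the empty-sum convention at $m=1$ and using closedness of each $I_j$ to get a closed interval.
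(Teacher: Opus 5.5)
Your proposal is correct and follows essentially the same route as the paper. You verify symmetry, zero diagonal, nonnegativity and coercivity to invoke Theorem~\ref{thm:common-center}, use the empty sum at $m=1$ and bounded sublevel sets at $m\ge2$, and in the univariate case combine star-shapedness about every median point with closedness of a finite union of intersections of closed intervals. Two points to tighten: the compactness step relies on $\Gamma^\varepsilon\subseteq\bigcup_j I_j$, which needs $k\ge1$, and impropriety for $\beta>2$ should be cited as a classical fact, as the paper does.
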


\begin{proof}
For $\beta\ge1$, $x\mapsto\|x-y\|^\beta$ is convex for every fixed $y$, $\rho_\beta$ is symmetric, $\rho_\beta(x,x)=0$, and $\rho_\beta(x,y)\ge0$.  A finite sum of these functions is convex.  Moreover $F_\beta(t)\to\infty$ as $\|t\|\to\infty$, so $F_\beta$ is coercive and continuous and therefore attains a global minimum.  Theorem~\ref{thm:common-center} applies with $q=0$.

When $m=1$, $G_1$ is the empty sum and $d_1=0$, giving $I_1=\mathbb R^d$.  If $m\ge2$, then $G_j$ contains at least one term $\|t-y_i\|^\beta$ and is coercive.  Its finite sublevel set $I_j=\{G_j\le d_j\}$ is therefore bounded, so every ray from the common center has a finite exit.

In the univariate $\beta=1$ case, the complete minimizer set is
\[
\mathcal M
=
\arg\min_t F_1(t)
=
\left[
y_{(\lceil m/2\rceil)},
y_{(\lfloor m/2\rfloor+1)}
\right].
\]
Theorem~\ref{thm:common-center} applies separately to every $c\in\mathcal M$, so
\[
c\in I_j
\qquad
\forall c\in\mathcal M,\quad \forall j.
\]
Each $I_j$ is a closed interval because it is a closed convex sublevel set in $\mathbb R$.  For $1\le k\le m$,
\[
\Gamma^\varepsilon
=
\bigcup_{\substack{S\subseteq\{1,\ldots,m\}\\ |S|=k}}
\ \bigcap_{j\in S}I_j.
\]
Hence $\Gamma^\varepsilon$ is closed, being a finite union of finite intersections of closed sets.  It is nonempty because every point of $\mathcal M$ lies in all $m$ comparison sets and therefore receives all $m$ votes.  Theorem~\ref{thm:common-center} also shows that $\Gamma^\varepsilon$ is star-shaped about every $c\in\mathcal M$.  A nonempty star-shaped subset of $\mathbb R$ is an interval, so $\Gamma^\varepsilon$ is a nonempty closed interval containing the entire median set $\mathcal M$.  When $m\ge2$, the comparison sets are bounded, and because $k\ge1$ the prediction set is contained in the finite union $\bigcup_{j=1}^m I_j$; it is therefore bounded and hence compact.  When $m=1$, $I_1=\mathbb R$, so the nontrivial prediction set is $\mathbb R$.

The statements about strict propriety, the $\beta=2$ endpoint, impropriety beyond $2$, and the CRPS scoring-rule representation are the classical facts cited in the statement.
\end{proof}

The lower boundary $\beta=1$ is exact for the deterministic geometry but is not interchangeable with the regularity range used for certification.  Conversely, one should not extend the deterministic star-shaped conclusion below $1$.  For example, with
\[
m=3,\qquad \beta=\frac12,\qquad (y_1,y_2,y_3)=(-1,0,2),
\]
the unique Fr\'echet minimizer is $c=0$, yet at $\varepsilon=0.8$ the exact prediction set contains $0$ and $1$ while rejecting $0.5$.  Thus it is not star-shaped about $c$.

The common center permits an exact radial encoding whenever the comparison sets themselves are star-shaped about that center.  For the power-distance range $\beta\ge1$, convexity of the $I_j$ supplies this condition automatically.

\begin{lemma}[Radial order-statistic representation]
\label{lem:radial-order}
Assume $I_1,\ldots,I_m$ are each star-shaped about the same point $c$.  For
\[
u\in\mathbb S^{d-1},
\]
define
\[
R_j(u)=\sup\{r\ge0:c+ru\in I_j\}\in[0,\infty].
\]
Let
\[
R_{(1)}(u)\le\cdots\le R_{(m)}(u)
\]
be the ascending order statistics.  For $0\le\varepsilon<1$ and
\[
1\le k=\lfloor(m+1)\varepsilon\rfloor\le m,
\]
the radial function of $\Gamma^\varepsilon$ about $c$ is
\[
R_\varepsilon(u)
=
\sup\{r\ge0:c+ru\in\Gamma^\varepsilon\},
\]
and satisfies
\[
R_\varepsilon(u)=R_{(m-k+1)}(u).
\]
If $k=0$, then
\[
\Gamma^\varepsilon=\mathbb R^d,
\qquad
R_\varepsilon(u)=+\infty.
\]
At $\varepsilon=1$, the prediction set is empty.  For the power-distance specialization with $m=1$, one has
\[
I_1=\mathbb R^d,
\qquad
R_1\equiv+\infty.
\]
Thus finite power-distance radial reconstruction begins at $m\ge2$.
\end{lemma}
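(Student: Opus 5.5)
The plan is to reduce the claim to a statement about finitely many nested sets along a single ray from $c$, using the vote-threshold form of $\Gamma^\varepsilon$ from Theorem~\ref{thm:common-center}. Fix $u\in\mathbb S^{d-1}$ and set
\[
J_j=\{r\ge0:c+ru\in I_j\},\qquad
J_\varepsilon=\{r\ge0:c+ru\in\Gamma^\varepsilon\}.
\]
Since each $I_j$ is star-shaped about $c$ and contains $c$, each $J_j$ is an interval of the form $[0,R_j(u))$ or $[0,R_j(u)]$, possibly $[0,\infty)$. The reason is that $r\in J_j$ together with $0\le s\le r$ gives $c+su\in[c,c+ru]\subseteq I_j$. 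So along the ray the comparison sets are nested initial segments of $[0,\infty)$.

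For the main step, I would use the strict-threshold translation already proved in Theorem~\ref{thm:common-center}: for $1\le k\le m$, $\Gamma^\varepsilon=\{t:N(t)\ge k\}$. Hence $r\in J_\varepsilon$ if and only if $\#\{j:r\in J_j\}\ge k$. The argument then has two directions.

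\emph{Lower bound.} If $r<R_{(m-k+1)}(u)$, then $r<R_j(u)$ for the $k$ indices carrying the top $k$ order statistics $R_{(m-k+1)},\dots,R_{(m)}$. For each such index there is some $r'>r$ with $r'\in J_j$, so $r\in J_j$ by the initial-segment property. Thus $r\in J_\varepsilon$, which gives $R_\varepsilon(u)\ge R_{(m-k+1)}(u)$.

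\emph{Upper bound.} If $r>R_{(m-k+1)}(u)$, then $r>R_j(u)$ for the $m-k+1$ indices carrying $R_{(1)},\dots,R_{(m-k+1)}$. So $r\notin J_j$ for those indices, at most $k-1$ votes remain, and $r\notin J_\varepsilon$. This gives $R_\varepsilon(u)\le R_{(m-k+1)}(u)$.

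The value $+\infty$ needs no separate treatment. If $R_{(m-k+1)}(u)=\infty$, the lower-bound argument already applies to every finite $r$. If it is finite, both bounds apply as written. Ties among the $R_j$ cause no trouble because only strict inequalities in $r$ are used.

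The endpoint statements follow directly from earlier results. For $k=0$, Theorem~\ref{thm:common-center} gives $\Gamma^\varepsilon=\mathbb R^d$, so the supremum is $+\infty$; for $\varepsilon=1$, the strict convention gives $\Gamma^1=\varnothing$. For the power-distance case with $m=1$, Corollary~\ref{cor:power} gives $I_1=\mathbb R^d$, so $R_1\equiv+\infty$. For $m\ge2$, the same corollary gives bounded comparison sets, so every $R_j(u)$ is finite and finite reconstruction becomes meaningful.

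The main subtlety is that membership of the boundary point $r=R_{(m-k+1)}(u)$ itself is undetermined, since the $I_j$ are not assumed closed. The claim concerns only the supremum, so it suffices to compare strictly smaller and strictly larger radii, as above, and no closedness assumption is needed.
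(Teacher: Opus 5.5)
Your proof is correct and takes the same route as the paper. Both reduce to initial intervals along each ray from $c$, read off $R_\varepsilon(u)$ as the $k$th-largest component supremum $R_{(m-k+1)}(u)$, and take the endpoint cases from Theorem~\ref{thm:common-center} and Corollary~\ref{cor:power}. Your two-sided comparison at radii strictly below and strictly above $R_{(m-k+1)}(u)$ spells out the step the paper states in one line, and it correctly avoids any closedness assumption on the $I_j$.
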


\begin{proof}
Fix $u$.  Along each ray, star-shapedness implies that the radial membership set of $I_j$ is an initial interval with supremum $R_j(u)$, possibly open at its endpoint.  Therefore the supremum radius at which at least $k$ votes remain is the $k$th-largest component supremum, namely $R_{(m-k+1)}(u)$.  The endpoint statements follow from the exact conformal convention in Section 2 and from the $m=1$ degeneracy in Corollary~\ref{cor:power}.
\end{proof}

The lemma turns exact $k$-coverage into a one-dimensional order statistic on each direction without imposing convexity on the aggregate prediction set.  This is the interface used for certification below.  At $\beta=1$, exact common-center star-shaped geometry remains valid, but unconditional global Lipschitz radial certification does not; Appendix A gives a discontinuous radial example.

\section{Certified Radial Reconstruction}
\label{sec:certified-reconstruction}

We now restrict to
\[
m\ge2,
\qquad
1<\beta<2,
\]
and to a nontrivial conformal threshold $1\le k\le m$.  The purpose of this section is to obtain an explicit common Lipschitz bound for the radial exits and then propagate certified directional intervals over the whole sphere.  The argument is local to the power-distance score geometry; it is not a generic FullCP certificate.

Throughout the remainder of the paper,
\[
B(a,R)=\{x\in\mathbb R^d:\|x-a\|\le R\}
\]
denotes the closed Euclidean ball.

For each $j$, recall
\[
G_j(t)=\sum_{i\ne j}\|t-y_i\|^\beta,
\qquad
I_j=\{t:G_j(t)\le d_j\},
\]
and define
\[
\mu_{-j}=\frac1{m-1}\sum_{i\ne j}y_i,
\qquad
r_j=\left(\frac{d_j}{m-1}\right)^{1/\beta}.
\]
Jensen's inequality gives, for every $t$,
\[
\frac{G_j(t)}{m-1}
\ge
\left\|t-\mu_{-j}\right\|^\beta,
\]
so
\[
I_j\subseteq B(\mu_{-j},r_j).
\]
With $c$ the common center from Corollary~\ref{cor:power}, set
\[
H_j=\|c-\mu_{-j}\|+r_j.
\]
Then $I_j\subseteq B(c,H_j)$.  This outer radius will be retained for radial search.  If $d_j>0$, define the focuswise bounds
\[
U_{ij}
=
\min\left\{
r_j+\|\mu_{-j}-y_i\|,
d_j^{1/\beta}
\right\},
\]
\[
\widetilde M_j
=
\beta\sum_{i\ne j}U_{ij}^{\beta-1},
\qquad
\widetilde\lambda_j
=
\beta(\beta-1)\sum_{i\ne j}U_{ij}^{\beta-2}.
\]
The tightened computational radial bound is
\[
\widetilde\ell_j=
\begin{cases}
0,&d_j=0,\\
2\widetilde M_j/\widetilde\lambda_j,&d_j>0.
\end{cases}
\]
The first branch is a separate convention: no negative power is evaluated when $d_j=0$.  Finally set
\[
\widetilde L=\max_j\widetilde\ell_j.
\]

It is useful to state the derivative argument in a modular form.  For a non-singleton $I_j$, suppose finite positive constants $V_{ij}$ satisfy
\[
\sup_{t\in I_j}\|t-y_i\|\le V_{ij},
\qquad i\ne j,
\]
and write
\[
M_j(V)=\beta\sum_{i\ne j}V_{ij}^{\beta-1},
\qquad
\lambda_j(V)=\beta(\beta-1)\sum_{i\ne j}V_{ij}^{\beta-2}.
\]
The proof below establishes
\[
\operatorname{Lip}(R_j)
\le
\frac{2M_j(V)}{\lambda_j(V)}.
\]
The choice $V_{ij}=U_{ij}$ is the explicit computable specialization.  If $I_j$ is a singleton, $R_j$ is identically zero and this derivative argument is unnecessary.

For comparison with the previous certificate, when $d_j>0$ define
\[
D_{ij}=H_j+\|c-y_i\|,
\]
\[
M_j=\beta\sum_{i\ne j}D_{ij}^{\beta-1},
\qquad
\lambda_j=\beta(\beta-1)\sum_{i\ne j}D_{ij}^{\beta-2},
\qquad
\ell_j=\frac{2M_j}{\lambda_j},
\]
and when $d_j=0$ set $\ell_j=0$.  Let $L_{\rm old}=\max_j\ell_j$.  These old center-dependent quantities are used below only to record deterministic dominance of the tightened certificate.

\begin{corollary}[Certified finite-direction radial reconstruction for $1<\beta<2$]
\label{cor:certified-radial}
Assume
\[
m\ge2,
\qquad
1<\beta<2.
\]
Fix
\[
0\le\varepsilon<1
\quad\text{such that}\quad
1\le k=\lfloor(m+1)\varepsilon\rfloor\le m.
\]
Let $c$ be the common center from Corollary~\ref{cor:power} and define the explicit constants
\[
\mu_{-j},r_j,H_j,U_{ij},\widetilde M_j,
\widetilde\lambda_j,\widetilde\ell_j,\widetilde L
\]
as above, with the separate $d_j=0$ convention.  Then
\[
\operatorname{Lip}(R_j)\le\widetilde\ell_j\le\ell_j,
\qquad
\operatorname{Lip}(R_\varepsilon)\le\widetilde L\le L_{\rm old},
\]
where Lipschitz constants on $\mathbb S^{d-1}$ are taken in chord distance.

Let
\[
\mathcal U=\{u_\ell\}_{\ell=1}^N\subset\mathbb S^{d-1}
\]
be a nonempty finite directional set.  Assume certified radial intervals
\[
a_\ell\le R_\varepsilon(u_\ell)\le b_\ell,
\qquad
b_\ell-a_\ell\le\delta.
\]
Define
\[
\underline R(u)
=
\max\left\{
0,
\max_\ell\left[a_\ell-\widetilde L\|u-u_\ell\|\right]
\right\},
\]
and
\[
\overline R(u)
=
\min_\ell\left[b_\ell+\widetilde L\|u-u_\ell\|\right].
\]
Then
\[
\underline R(u)\le R_\varepsilon(u)\le\overline R(u)
\qquad\forall u\in\mathbb S^{d-1}.
\]
Let
\[
h_{\mathcal U}
=
\sup_{u\in\mathbb S^{d-1}}
\min_\ell\|u-u_\ell\|.
\]
Then
\[
\|\overline R-\underline R\|_\infty
\le
\delta+2\widetilde Lh_{\mathcal U}.
\]
Define
\[
\underline\Gamma
=
\{c+ru:u\in\mathbb S^{d-1},\ 0\le r\le\underline R(u)\},
\]
and
\[
\overline\Gamma
=
\{c+ru:u\in\mathbb S^{d-1},\ 0\le r\le\overline R(u)\}.
\]
Then
\[
\underline\Gamma\subseteq\Gamma^\varepsilon\subseteq\overline\Gamma.
\]
Moreover,
\[
d_H(\underline\Gamma,\Gamma^\varepsilon)
\le
\|R_\varepsilon-\underline R\|_\infty
\le
\delta+2\widetilde Lh_{\mathcal U},
\]
\[
d_H(\Gamma^\varepsilon,\overline\Gamma)
\le
\|\overline R-R_\varepsilon\|_\infty
\le
\delta+2\widetilde Lh_{\mathcal U},
\]
and directly
\[
d_H(\underline\Gamma,\overline\Gamma)
\le
\|\overline R-\underline R\|_\infty
\le
\delta+2\widetilde Lh_{\mathcal U}.
\]
\end{corollary}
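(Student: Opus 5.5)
The proof has three layers: a per-set Lipschitz bound for each radial exit $R_j$, the transfer of that bound to $R_\varepsilon$ through Lemma~\ref{lem:radial-order}, and then routine McShane--Whitney bookkeeping. For the per-set step we first dispose of trivial cases. If $d_j=0$, or more generally $I_j$ is a singleton, then $I_j=\{c\}$, so $R_j\equiv0$ and the bound holds with $\widetilde\ell_j=0$.

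Otherwise $d_j>0$. We must first check that each $U_{ij}$ is a valid bound $V_{ij}$. On $I_j$ we have $\|t-y_i\|^\beta\le G_j(t)\le d_j$, which gives the term $d_j^{1/\beta}$. The inclusion $I_j\subseteq B(\mu_{-j},r_j)$ gives $r_j+\|\mu_{-j}-y_i\|$. Since the maps $x\mapsto x^{\beta-1}$ and $x\mapsto x^{\beta-2}$ are monotone (increasing and decreasing respectively) on $(0,\infty)$, the comparison $U_{ij}\le D_{ij}$ yields $\widetilde M_j\le M_j$ and $\widetilde\lambda_j\ge\lambda_j$, hence $\widetilde\ell_j\le\ell_j$. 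To see $U_{ij}\le D_{ij}$, note that $r_j+\|\mu_{-j}-y_i\|\le r_j+\|\mu_{-j}-c\|+\|c-y_i\|=D_{ij}$.

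The modular derivative bound uses two facts on $I_j\setminus\{y_i\}$. First, the gradient satisfies $\|\nabla G_j(t)\|\le\beta\sum_{i\ne j}\|t-y_i\|^{\beta-1}\le M_j(V)$. Second, the Hessian of $\|t-y_i\|^\beta$ has eigenvalues $\beta\|t-y_i\|^{\beta-2}$ and $\beta(\beta-1)\|t-y_i\|^{\beta-2}$. Because $\beta-2<0$, these are at least $\beta(\beta-1)V_{ij}^{\beta-2}$, so $G_j$ is $\lambda_j(V)$-strongly convex on the convex set $I_j$. The set of points $y_i$ where $G_j$ is not twice differentiable has measure zero, and we handle it by a segment/Clarke argument or by integrating along segments.

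Strong convexity gives the supporting-ball property. For a boundary point $x$ with $G_j(x)=d_j$, the inequality $G_j(t)\ge d_j+\langle g,t-x\rangle+\tfrac{\lambda}{2}\|t-x\|^2$ with $g=\nabla G_j(x)$ shows that $I_j\subseteq B\bigl(x-g/\lambda,\|g\|/\lambda\bigr)$, a ball of radius at most $M/\lambda$. We then convert this into a Lipschitz bound on $R_j$ by a standard star-shaped-body argument. Because $c\in I_j$ and $G_j(c)\le d_j$, we get $\langle g,x-c\rangle\ge\tfrac{\lambda}{2}\|x-c\|^2$, which bounds the angle between the outward normal and the ray. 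Writing the radial function $R_j$ as a function of the direction and applying implicit differentiation, $|\partial_u R_j|\le R_j\|g\|/\langle g,u\rangle\le R_j\cdot\|g\|\cdot 2/(\lambda R_j)=2M/\lambda$. The cancellation of $R_j$ is exactly where the factor $2M/\lambda$ arises. The main obstacle is making this rigorous without smoothness of $\partial I_j$ and with chord (rather than geodesic) distance. A degenerate center $c$ on $\partial I_j$ is harmless, since then $R_j$ is small. Geodesic and chord distances agree to first order, so one can integrate along great-circle arcs, using that the angle bound is uniform. The key caveat is that chord distance is no larger than arc length, so a pointwise derivative bound gives a Lipschitz bound in arc length only. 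Passing to chord distance will require care. The approach I would try first is a direct two-direction argument: the ball containment at the exit point in one direction bounds the exit in a nearby direction, comparing $R_j(v)$ with $R_j(u)$ via the supporting ball and its chord, which gives the bound directly in chord distance.

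Next, an order statistic of $L$-Lipschitz functions is $L$-Lipschitz: each $R_{(m-k+1)}$ is a max--min of the $R_j$, and Lemma~\ref{lem:radial-order} identifies it with $R_\varepsilon$. This gives $\operatorname{Lip}(R_\varepsilon)\le\widetilde L\le L_{\rm old}$.

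For the certificate, the sandwich bound follows termwise from the Lipschitz property: $a_\ell-\widetilde L\|u-u_\ell\|\le R_\varepsilon(u)\le b_\ell+\widetilde L\|u-u_\ell\|$, together with $R_\varepsilon\ge0$. For the width bound, pick $\ell$ with $\|u-u_\ell\|\le h_{\mathcal U}$; then $\overline R(u)-\underline R(u)\le b_\ell-a_\ell+2\widetilde L h_{\mathcal U}$. Here we use $\underline R\ge a_\ell-\widetilde L\|u-u_\ell\|$, and if $h_{\mathcal U}$ is attained only as a supremum we replace it by $h_{\mathcal U}+\eta$ and let $\eta\to0$. Set inclusions follow from star-shapedness and closedness: $\Gamma^\varepsilon$ is closed because each $I_j$ is a closed sublevel set, so each ray section is $[0,R_\varepsilon(u)]$. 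Finally, for star-shaped sets about $c$ with radial functions $R\le R'$, every point $c+ru$ with $r\le R'(u)$ lies within distance $R'(u)-R(u)$ of $c+\min(r,R(u))u$, giving the Hausdorff bounds via sup-norms.
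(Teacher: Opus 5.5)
Most of your outline tracks the paper. The validity of $U_{ij}$, the dominance $U_{ij}\le D_{ij}$, the gradient bound and restricted strong convexity on $I_j$, the tangent ball $B(x-g/\lambda,\|g\|/\lambda)$, the max--min form of the order statistic, the envelope sandwich and the same-ray Hausdorff conversion are all sound. Your width estimate, which uses $\underline R(u)\ge a_\ell-\widetilde L\|u-u_\ell\|$ directly, is cleaner than the paper's case split.

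\textbf{The gap.} It sits at the one step the corollary rests on: $\operatorname{Lip}(R_j)\le 2M_j(V)/\lambda_j(V)$ \emph{in chord distance}. Implicit differentiation gives only the pointwise tangential bound $|\partial_w R_j|\le 2\|g\|/\lambda$, which is a Lipschitz bound in geodesic distance. A pointwise derivative bound does not transfer to chord distance with the same constant, because chord length is smaller than arc length. For example, on $\mathbb S^1$ the function $\theta\mapsto L|\theta|$, $\theta\in(-\pi,\pi]$, is $L$-Lipschitz in arc length, yet its chord quotient between $\theta=0$ and $\theta=\pi$ is $L\pi/2$. As written you therefore get only $\pi M_j(V)/\lambda_j(V)$. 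Neither $\operatorname{Lip}(R_j)\le\widetilde\ell_j$ nor the width bound $\delta+2\widetilde Lh_{\mathcal U}$, with $h_{\mathcal U}$ measured in chords, follows. You flag this yourself, but you leave the fix as a plan.

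\textbf{The missing ingredient.} You need the explicit exit radius of a ball. Translate $c$ to the origin. For a ball $B(a,\rho)$ with $\|a\|\le\rho$, the exit along $u$ is $r_a(u)=a^\top u+\sqrt{(a^\top u)^2+\rho^2-\|a\|^2}$. This depends on $u$ only through $s=a^\top u$, and $s\mapsto s+\sqrt{s^2+C}$ is $2$-Lipschitz for $C\ge0$. Hence $|r_a(u)-r_a(v)|\le 2|a^\top(u-v)|\le 2\rho\|u-v\|$. This is already a chord-distance bound, since $r_a$ is the restriction of a Lipschitz function on $\mathbb R^d$.

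\textbf{How your plan then closes.} First note that $c\in\operatorname{int}I_j$ whenever $I_j$ is not a singleton, so the degenerate center you wave away never occurs. Indeed $G_j(c)\le d_j-\|c-y_j\|^\beta$. If $c=y_j$, then $\nabla G_j(y_j)=\nabla F_\beta(c)=0$, which makes $I_j=\{y_j\}$. Now take the supporting ball at $x_u=c+R_j(u)u$. Its radius is $\rho_u=\|\nabla G_j(x_u)\|/\lambda_j(V)\le M_j(V)/\lambda_j(V)$, it contains $I_j\ni c$, and $x_u$ lies on its boundary sphere. So $R_j\le r_a$ on the whole sphere, with equality at $u$. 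This gives $R_j(v)\le R_j(u)+2\rho_u\|u-v\|$, and swapping $u,v$ finishes the bound.

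\textbf{Comparison with the paper.} The paper packages the same computation differently. It proves the exact representation $I_j=\bigcap_{x\in\partial I_j}B(x-Rp_x,R)$ via metric projection and the normal-cone identity. It then uses that an infimum of $2R$-Lipschitz functions is $2R$-Lipschitz. Your touching version needs neither the uniform radius nor the reverse inclusion.
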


\begin{proof}
We first prove the modular claim.  Suppose $I_j$ is not a singleton and let positive finite bounds $V_{ij}$ be as above.  Since $\|\cdot-y_i\|^\beta$ is $C^1$ for $\beta>1$, including at its focus, every $t\in I_j$ satisfies
\[
\|\nabla G_j(t)\|
\le
\beta\sum_{i\ne j}\|t-y_i\|^{\beta-1}
\le M_j(V).
\]
Away from a focus,
\[
\nabla^2\|t-y_i\|^\beta
=
\beta\|t-y_i\|^{\beta-2}I
+
\beta(\beta-2)\|t-y_i\|^{\beta-4}
(t-y_i)(t-y_i)^\top,
\]
whose minimum eigenvalue is
\[
\beta(\beta-1)\|t-y_i\|^{\beta-2}.
\]
Take arbitrary $x,y\in I_j$.  Convexity of $I_j$ keeps
\[
z(s)=x+s(y-x),
\qquad 0\le s\le1,
\]
inside $I_j$.  If $x\ne y$, partition this segment at its finitely many crossings with the observations $y_i$.  On each open subsegment away from the foci, $\beta-2<0$ and $\|z(s)-y_i\|\le V_{ij}$ give
\[
\nabla^2G_j(z(s))\succeq\lambda_j(V)I.
\]
The classical Hessian is not evaluated as a finite matrix at a focus.  Instead, integrate on the open pieces and pass through each crossing using continuity of the $C^1$ gradient; the singularity has order $|s-s_0|^{\beta-2}$ and is locally integrable.  A second integration yields, only for $x,y\in I_j$,
\[
G_j(y)
\ge
G_j(x)
+
\nabla G_j(x)^\top(y-x)
+
\frac{\lambda_j(V)}{2}\|y-x\|^2.
\]
Appendix C expands the segment integration and derivative bounds.

For this non-singleton sublevel, strict convexity of $G_j$ implies that $d_j$ is strictly above its minimum, so $I_j$ has nonempty interior.  If $x\in\partial I_j$, then $G_j(x)=d_j$ and $\nabla G_j(x)\ne0$: otherwise the preceding inequality, applied from $x$ to any distinct point of $I_j$, would contradict the sublevel condition.  Write
\[
g_x=\|\nabla G_j(x)\|,
\qquad
p_x=\frac{\nabla G_j(x)}{g_x}.
\]
For every $y\in I_j$, strong convexity and $G_j(y)\le G_j(x)$ imply
\[
\nabla G_j(x)^\top(y-x)
+
\frac{\lambda_j(V)}{2}\|y-x\|^2
\le0.
\]
Completing the square gives the tangent containing ball
\[
I_j
\subset
B\left(
 x-\frac{g_x}{\lambda_j(V)}p_x,
 \frac{g_x}{\lambda_j(V)}
\right).
\]
Because $g_x\le M_j(V)$, the smaller tangent ball is nested in the ball with the same tangency point and normal but the uniform radius $M_j(V)/\lambda_j(V)$:
\[
I_j
\subset
B\left(
 x-\frac{M_j(V)}{\lambda_j(V)}p_x,
 \frac{M_j(V)}{\lambda_j(V)}
\right).
\]
Here the precise supporting-normal connection is the convex-sublevel normal-cone identity.  Since $G_j$ is convex and $C^1$, $I_j$ has nonempty interior, and its boundary gradients are nonzero,
\[
N_{I_j}(x)
=
\{\tau\nabla G_j(x):\tau\ge0\}.
\]
Thus the normalized gradients $p_x$ are exactly the unit outward supporting normals.  Only at this point do we invoke the classical radius-$R$ supporting-ball/strongly-convex-set framework \citep{balashov_golubev_2012_projection}; the score-specific radius has already been derived above.

Set
\[
R=\frac{M_j(V)}{\lambda_j(V)}.
\]
These uniform-radius supporting balls recover $I_j$ exactly.  Indeed, let
\[
\mathcal B_j
=
\left\{
B(x-Rp_x,R):
x\in\partial I_j
\right\}.
\]
The preceding argument gives
\[
I_j
\subseteq
\bigcap_{B\in\mathcal B_j}B.
\]
For the reverse inclusion, take any $z\notin I_j$ and let
\[
x=\Pi_{I_j}(z)
\]
be its metric projection onto the nonempty closed convex set $I_j$.  Then $x\in\partial I_j$ and
\[
p=\frac{z-x}{\|z-x\|}
\in N_{I_j}(x).
\]
By the normal-cone identity above, $p=p_x$ is the relevant unit outward supporting normal.  The corresponding ball $B(x-Rp,R)$ contains $I_j$, whereas
\[
\|z-(x-Rp)\|
=
R+\|z-x\|
>
R.
\]
Hence $z\notin B(x-Rp,R)$, and therefore
\[
I_j
=
\bigcap_{B\in\mathcal B_j}B.
\]

The same supporting balls yield the radial Lipschitz constant directly.  Translate $c$ to the origin.  Because $c\in I_j$, every ball in the exact representation contains the origin.  For any radius-$R$ ball $B(a,R)$ containing the origin, its positive-ray exit in direction $u$ is
\[
r_a(u)
=
a^\top u
+
\sqrt{(a^\top u)^2+R^2-\|a\|^2}.
\]
As a function of $s=a^\top u$, the derivative of $s+\sqrt{s^2+C}$ with $C=R^2-\|a\|^2\ge0$ lies between $0$ and $2$ whenever $C>0$; for $C=0$, the limiting function $s+|s|$ is still $2$-Lipschitz.  Consequently
\[
|r_a(u)-r_a(v)|
\le2\,|a^\top(u-v)|
\le2R\|u-v\|.
\]
By the exact intersection representation, the radial exit of $I_j$ is the infimum of the radial exits of these radius-$R$ containing balls.  An infimum of functions sharing a common Lipschitz constant retains that constant.  Hence
\[
\operatorname{Lip}(R_j)
\le2R
=
\frac{2M_j(V)}{\lambda_j(V)}.
\]
We now specialize the modular bound.  If $d_j=0$, every nonnegative term in $d_j=G_j(y_j)$ vanishes, so all observations coincide, $I_j=\{y_j\}$, and $R_j\equiv0$.  Thus $\operatorname{Lip}(R_j)=\widetilde\ell_j=0$, without evaluating $0^{\beta-2}$.  If $d_j>0$, then each $U_{ij}$ is finite and strictly positive.  For $t\in I_j$, Jensen containment and the triangle inequality give
\[
\|t-y_i\|
\le r_j+\|\mu_{-j}-y_i\|,
\]
while the $i$th nonnegative summand of $G_j(t)\le d_j$ gives
\[
\|t-y_i\|\le d_j^{1/\beta}.
\]
Hence $\sup_{t\in I_j}\|t-y_i\|\le U_{ij}$.  If $I_j$ is non-singleton, the modular argument with $V_{ij}=U_{ij}$ proves
\[
\operatorname{Lip}(R_j)
\le
\frac{2\widetilde M_j}{\widetilde\lambda_j}
=\widetilde\ell_j.
\]
If $d_j>0$ but $I_j$ is a singleton, then $R_j\equiv0$ and the same inequality holds trivially; the finite positive value $\widetilde\ell_j$ requires no computational singleton test.

The tightened constants deterministically dominate the old certificate.  Indeed,
\[
U_{ij}
\le
r_j+\|\mu_{-j}-y_i\|
\le
r_j+\|\mu_{-j}-c\|+\|c-y_i\|
=D_{ij}.
\]
Since $\beta-1>0$ and $\beta-2<0$, for $d_j>0$ this implies
\[
\widetilde M_j\le M_j,
\qquad
\widetilde\lambda_j\ge\lambda_j,
\qquad
\widetilde\ell_j\le\ell_j.
\]
The zero branch gives the same conclusion when $d_j=0$.  Consequently $\widetilde L\le L_{\rm old}$.

By Lemma~\ref{lem:radial-order}, $R_\varepsilon(u)$ is an order statistic of $R_1(u),\ldots,R_m(u)$.  Order statistics are $1$-Lipschitz with respect to the $\ell_\infty$ norm: if two vectors $a,b\in\mathbb R^m$ satisfy $\max_j|a_j-b_j|\le\eta$, then each coordinate of $a$ is at most the corresponding threshold for $b$ plus $\eta$, and vice versa, so the same bound holds for every matched order statistic.  Therefore
\[
|R_\varepsilon(u)-R_\varepsilon(v)|
\le
\max_j|R_j(u)-R_j(v)|
\le
\widetilde L\|u-v\|.
\]

We next pass from finitely many certified intervals to global envelopes.  The classical McShane--Whitney formulas motivate the extremal Lipschitz extensions \citep{arnau_calabuig_erdogan_sanchezperez_2023}, but the interval-certified form here follows immediately from the inequalities themselves.  For every sampled direction $u_\ell$ and every $u$,
\[
R_\varepsilon(u)
\ge
R_\varepsilon(u_\ell)-\widetilde L\|u-u_\ell\|
\ge
a_\ell-\widetilde L\|u-u_\ell\|,
\]
and $R_\varepsilon(u)\ge0$.  Maximizing these lower bounds gives
\[
\underline R(u)\le R_\varepsilon(u).
\]
Similarly,
\[
R_\varepsilon(u)
\le
R_\varepsilon(u_\ell)+\widetilde L\|u-u_\ell\|
\le
b_\ell+\widetilde L\|u-u_\ell\|,
\]
and minimizing gives
\[
R_\varepsilon(u)\le\overline R(u).
\]

For the envelope width, choose for each $u$ a sampled direction $u_{\ell^*}$ with
\[
\|u-u_{\ell^*}\|\le h_{\mathcal U}.
\]
If $a_{\ell^*}-\widetilde L\|u-u_{\ell^*}\|\ge0$, then
\begin{align*}
\overline R(u)-\underline R(u)
&\le
b_{\ell^*}+\widetilde L\|u-u_{\ell^*}\|
-
\left(a_{\ell^*}-\widetilde L\|u-u_{\ell^*}\|\right)\\
&\le
\delta+2\widetilde Lh_{\mathcal U}.
\end{align*}
If the displayed lower candidate is negative, then $a_{\ell^*}<\widetilde L\|u-u_{\ell^*}\|$ and $\underline R(u)\ge0$, so
\[
\overline R(u)-\underline R(u)
\le
b_{\ell^*}+\widetilde L\|u-u_{\ell^*}\|
<
\delta+2\widetilde Lh_{\mathcal U}.
\]
Thus
\[
\|\overline R-\underline R\|_\infty
\le\delta+2\widetilde Lh_{\mathcal U}.
\]

The pointwise radial inequalities immediately give
\[
\underline\Gamma\subseteq\Gamma^\varepsilon\subseteq\overline\Gamma.
\]
For the Hausdorff bounds, take a point $c+ru$ in the larger of any two nested same-center radial sets and project it along the same ray to radius $\min\{r,R_{\rm smaller}(u)\}$.  The resulting distance is at most the corresponding radial-function difference in direction $u$.  Taking suprema over directions gives each of the three stated same-ray Hausdorff inequalities.  No external result is needed for this conversion.
\end{proof}

Operationally, the certified intervals assumed above can be obtained with the existing directional root/bisection architecture rather than a new per-ray algorithm.  For fixed $u$, each comparison set is star-shaped about $c$, so membership along $c+ru$ is an initial interval and $N(c+ru)$ is nonincreasing in $r$.  A finite valid radial upper bound is available from the comparison-set outer radii, for example $H_{\max}:=\max_j H_j$.  Evaluate the threshold predicate $N(c+H_{\max}u)\ge k$.  If it holds, then the upper bound forces $R_\varepsilon(u)=H_{\max}$.  Otherwise $r=0$ is accepted and $r=H_{\max}$ is rejected, so ordinary bisection on the monotone threshold produces an accepted/rejected bracket of width at most $\delta$, yielding a certified $[a_\ell,b_\ell]$ in the root-finding lineage of \citet{ndiaye_takeuchi_2023_rootfinding} and the multi-output directional architecture of \citet{johnstone_ndiaye_2025_multioutput}.  This use of existing root/bisection machinery is not a runtime or per-ray speed contribution.  Exact numerical computation of the Fr\'echet minimizer is also unnecessary for the common-center argument in this power-distance setting: any numerical candidate $\tilde c$ for which the inequalities $G_j(\tilde c)\le d_j$ are certifiably verified for every $j$ is a valid common star center.  For reconstruction about such a numerical center, $H_j(\tilde c)=\|\tilde c-\mu_{-j}\|+r_j$, and hence the radial-search bracket $H_{\max}$, must be recomputed.  In contrast, $U_{ij}$, $\widetilde M_j$, $\widetilde\lambda_j$, $\widetilde\ell_j$, and $\widetilde L$ are independent of the chosen common radial center and do not change.  The old comparison quantities $D_{ij},M_j,\lambda_j,\ell_j$, and $L_{\rm old}$ remain center-dependent if they are retained for the dominance comparison.

The reconstruction theorem is stated here only for $1<\beta<2$.  Comparison-set convexity itself holds for every $\beta\ge1$; the narrower range used here lies inside the strictly proper energy-score regime and is the range for which we develop the displayed differentiable strong-convexity bounds without a separate endpoint treatment.  Exact common-center geometry persists at $\beta=1$, but Appendix A shows why the certification argument cannot simply be extended to that endpoint.

\section{Why Preserving Nonconvexity Matters}
\label{sec:nonconvexity}

For this section, write $\Gamma=\Gamma^\varepsilon$.

A two-dimensional analytic example makes the distinction between star-shapedness and convexity explicit.  Let
\[
d=2,
\qquad
m=2,
\qquad
\beta=\frac32,
\]
with
\[
y_1=(-1,0),
\qquad
y_2=(1,0),
\qquad
\varepsilon=\frac12.
\]
Then
\[
k=\lfloor3/2\rfloor=1.
\]
For $j=1$, the comparison objective contains only the distance to $y_2$, and $d_1=\|y_1-y_2\|^{3/2}=2^{3/2}$.  Therefore
\[
I_1=B(y_2,2),
\qquad
I_2=B(y_1,2),
\]
and the one-vote prediction region is exactly
\[
\Gamma^\varepsilon=I_1\cup I_2.
\]
The origin is a common center.  With
\[
u(\theta)=(\cos\theta,\sin\theta),
\]
the two ball exits are
\[
\cos\theta+\sqrt{\cos^2\theta+3}
\quad\text{and}\quad
-\cos\theta+\sqrt{\cos^2\theta+3},
\]
so the union has exact radial function
\[
R(\theta)
=
|\cos\theta|+\sqrt{\cos^2\theta+3}.
\]
This set is star-shaped about $0$ but nonconvex: specifically,
\[
(-1,2),(1,2)\in\Gamma^\varepsilon,
\qquad
(0,2)\notin\Gamma^\varepsilon.
\]

Its convex hull is the stadium
\[
\operatorname{conv}\Gamma^\varepsilon
=
[-e_1,e_1]+B(0,2).
\]
Direct geometry gives the exact radial convexification discrepancy
\[
\|R_{\operatorname{conv}\Gamma}-R_\Gamma\|_\infty
=2-\sqrt3,
\]
and the exact Hausdorff discrepancy
\[
d_H(\Gamma,\operatorname{conv}\Gamma)
=\sqrt5-2.
\]
The latter is attained at the midpoint of the top or bottom flat portion of the stadium boundary, whose distance to either radius-$2$ component ball is $\sqrt5-2$.  These positive discrepancies are properties of this analytic example; they do not imply that convex reconstruction is inappropriate in every conformal problem.

\begin{figure}[H]
\centering
\begin{minipage}[t]{0.475\textwidth}
  \centering
  \includegraphics[width=\linewidth]{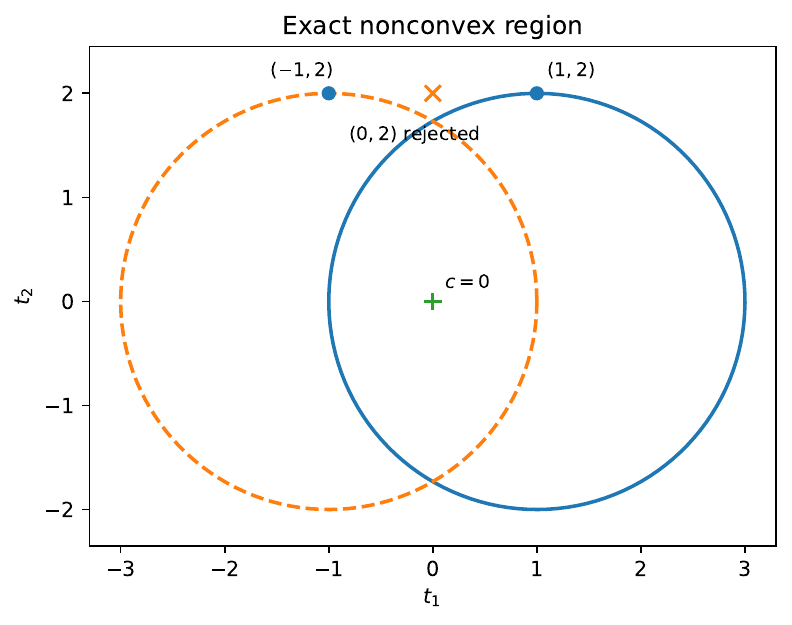}
  \par\smallskip\textbf{(A)}
\end{minipage}\hfill
\begin{minipage}[t]{0.475\textwidth}
  \centering
  \includegraphics[width=\linewidth]{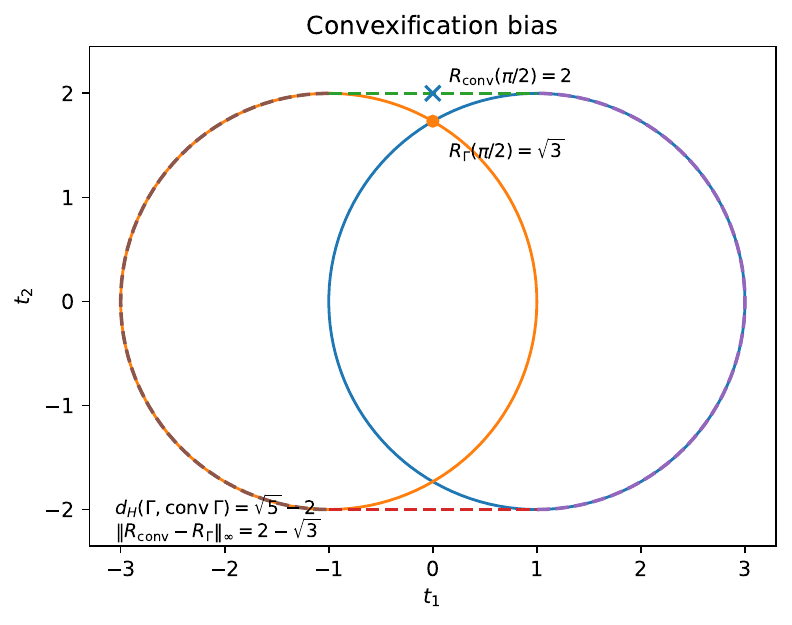}
  \par\smallskip\textbf{(B)}
\end{minipage}

\medskip
\begin{minipage}[t]{0.76\textwidth}
  \centering
  \includegraphics[width=\linewidth]{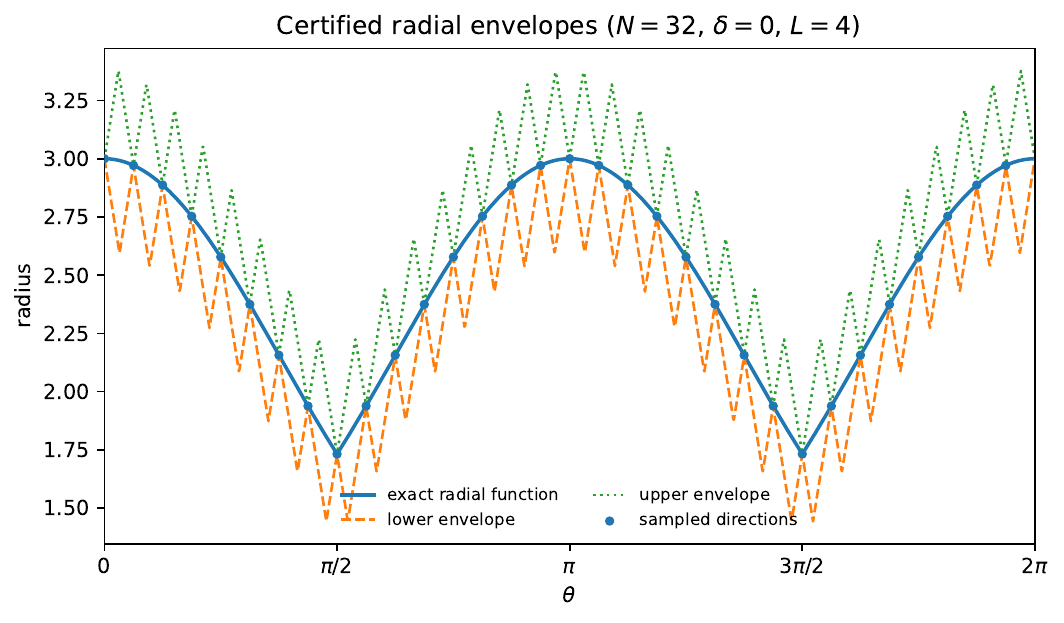}
  \par\smallskip\textbf{(C)}
\end{minipage}
\caption{%
Analytic nonconvex example with $m=2$, $d=2$, $\beta=3/2$, $y_1=(-1,0)$, $y_2=(1,0)$, and $\varepsilon=1/2$.
(A) The exact FullCP region is the union of two radius-$2$ balls, is star-shaped about $0$, and is nonconvex: $(-1,2)$ and $(1,2)$ are accepted while $(0,2)$ is rejected.
(B) The convex hull is $[-e_1,e_1]+B(0,2)$, giving exact discrepancies $\|R_{\operatorname{conv}\Gamma}-R_\Gamma\|_\infty=2-\sqrt3$ and $d_H(\Gamma,\operatorname{conv}\Gamma)=\sqrt5-2$.
(C) In a deterministic analytic rendering using $N=32$ equally spaced directions, exact sampled radii ($\delta=0$), and the rigorous geometry-derived bound $L=4$, the McShane--Whitney radial envelopes contain the exact radial function.  The old generic bound is $L_{\rm old}=16$, the tightened generic theorem gives $\widetilde L=8$, and direct analysis gives the sharper $L_{\rm direct}=3/2$.
}
\label{fig:analytic-nonconvex}
\end{figure}

Figure~\ref{fig:analytic-nonconvex} also separates four regularity levels.  The old common-center generic construction gives $L_{\rm old}=16$.  In the tightened theorem, $U_{12}=U_{21}=2$, so $\widetilde M_j=(3/2)\sqrt2$, $\widetilde\lambda_j=3/(4\sqrt2)$, and $\widetilde L=2\widetilde M_j/\widetilde\lambda_j=8$.  The radius-$2$ ball geometry itself immediately yields $L=2\times2=4$, which is the primary rigorous certificate used in Panel C.  Finally, writing the exact radius as a function of $s=|u_1|\in[0,1]$ gives
\[
f(s)=s+\sqrt{s^2+3},
\qquad
f'(s)=1+\frac{s}{\sqrt{s^2+3}}\le\frac32,
\]
and $u\mapsto|u_1|$ is $1$-Lipschitz in chord distance.  Thus $L_{\rm direct}=3/2$ is a direct analytic bound.  The sequence
\[
16\ \longrightarrow\ 8\ \longrightarrow\ 4\ \longrightarrow\ \frac32
\]
uses successively more problem-specific geometric information; in particular, the tightened generic theorem gives $8$, not $4$ or $3/2$.

The nonconvexity is not an artifact of using only two observations.  In the three-point example of Appendix B with $m=3$, $\beta=3/2$, and $k=2$, two symmetric endpoints are accepted while their midpoint is rejected, with nonzero comparison margins.  The numerical margins are recorded only in the appendix.

\section{Empirical Certificate Tightness and Nonconvexity}
\label{sec:empirical_geometry}

We complement the deterministic results with a staged numerical study in $d=2$.  The initial smoke stage used 5 independent clouds per $(m,\mathrm{DGP})$ cell, 40 total.  An advance-declared rule permitted expansion to 20 independent clouds per cell only after the stated numerical and engineering quality gates passed.  After those gates passed, the exact 160-cloud expanded-production protocol was frozen before any expanded-run results were observed.  The original five clouds in each cell were retained in the final 20, so the final 160-cloud dataset includes the original 40 smoke clouds.  For each $m\in\{10,25\}$, the final dataset contained 20 independent clouds from each of four designs---an isotropic Gaussian, a rotated anisotropic Gaussian, a symmetric Gaussian mixture, and a standardized bivariate $t_3$ distribution.\footnote{The four designs were: (i) $\mathcal N(0,I_2)$; (ii) $\mathcal N(0,\Sigma)$ with $\Sigma=Q\operatorname{diag}(1.8,0.2)Q^\top$ and $Q$ a rotation by $\pi/6$; (iii) the equal mixture of $\mathcal N(+1.25e_1,0.6^2I_2)$ and $\mathcal N(-1.25e_1,0.6^2I_2)$, followed by multiplication of the complete draw by $0.936072754401818$; and (iv) an isotropic bivariate $t_3$ distribution with scale matrix $I_2/3$.}  The fixed grid was $\beta\in\{1.1,1.5,1.9\}$ and $q_{\rm vote}\in\{0.10,0.20,0.50\}$, with each cloud reused across the $\beta/q_{\rm vote}$ settings.  The threshold target determines
\[
k=\max\{1,\lfloor q_{\rm vote}m+1/2\rfloor\},
\]
whose corresponding strict-conformal $\varepsilon$ interval is $[k/(m+1),(k+1)/(m+1))$.  The frozen figures label this empirical threshold target simply as $q$; it is unrelated to the constant-diagonal $q$ in Theorem~\ref{thm:common-center}.  Certificate summaries therefore operate at the 480 cloud--$\beta$ level, whereas witness and radial summaries use repeated $\beta/q_{\rm vote}$ configurations within cloud; the resulting 1,440 cloud--$\beta$--$k$ rows are not independent observations.

Across the 480 cloud--$\beta$ analyses, the median ratio $\widetilde L/L_{\rm old}$ was $0.966$, with IQR approximately $[0.941,0.976]$, equivalent to a median tightening of $3.39\%$.  Descriptively, the median ratio was $0.941$ for $m=10$ and $0.975$ for $m=25$; no inferential significance is attached to this difference.  Consistent with the deterministic ordering in Corollary~\ref{cor:certified-radial}, $\widetilde L\le L_{\rm old}$ in every one of the 480 analyses.  Figure~\ref{fig:empirical_certificate} summarizes the certificate comparison.

To fix the numerical scale, on the full validation mesh define
\[
W_{\max}(L,N)
=
\max_u\left[\overline R_{L,N}(u)-\underline R_{L,N}(u)\right].
\]
On the 8,192-direction radial mesh, with radial values $\widehat R_i$, define
\[
A_\Gamma=\frac{\pi}{8192}\sum_{i=1}^{8192}\widehat R_i^2,
\qquad
r_{\rm eq}=\sqrt{A_\Gamma/\pi}.
\]
For the operational target $W_{\max}\le0.10\,r_{\rm eq}$, neither certificate achieved the target at $N\le128$ in the expanded study.  At $N=256$, the descriptive fraction was $128/1440=8.89\%$ for $L_{\rm old}$ and $165/1440=11.46\%$ for $\widetilde L$.  These fractions aggregate repeated cloud--$\beta$--$k$ configurations and are not independent-observation estimates.  The tilde envelope width was smaller in all 7,200 declared analysis-by-$N$ comparisons, as expected from the deterministic certificate ordering.

\begin{figure}[tbp]
\centering
\includegraphics[width=\textwidth]{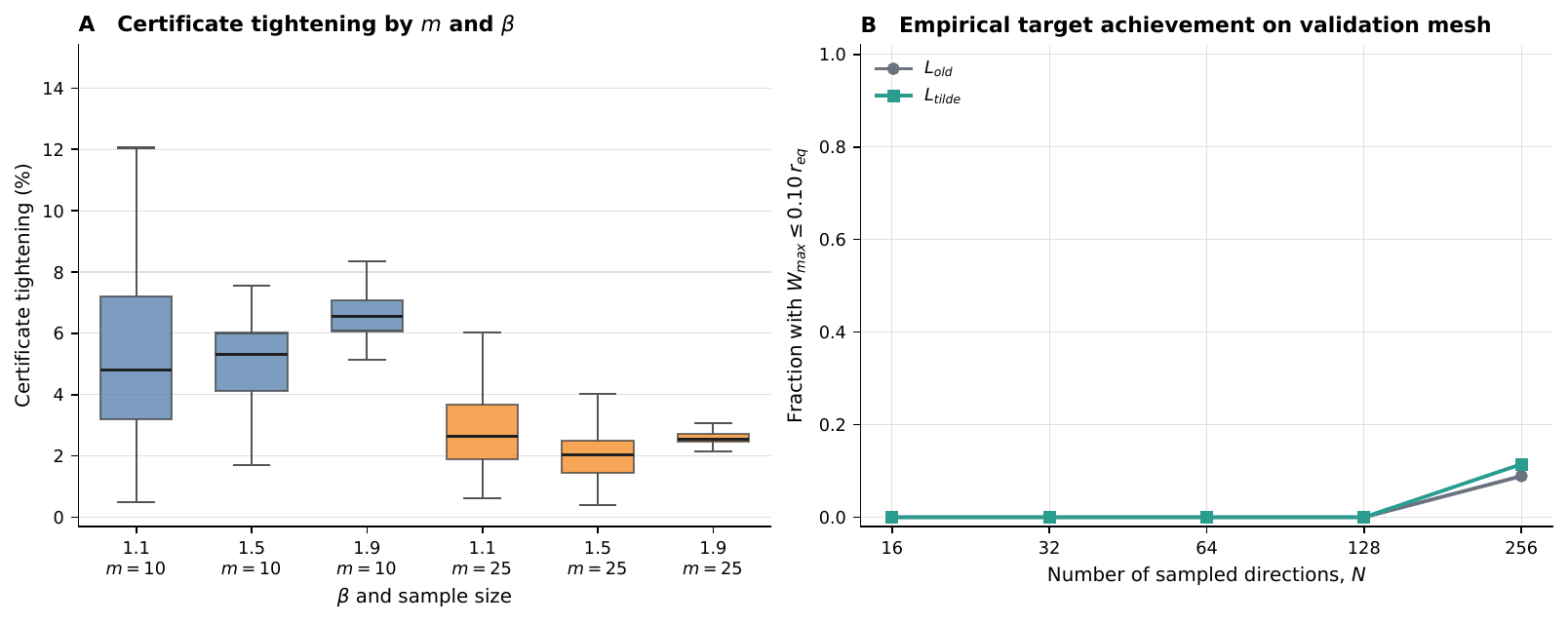}
\caption{%
Empirical behavior of the radial certificates in the staged two-dimensional study.  Panel A reports the distribution of $100(1-\widetilde L/L_{\rm old})$ at the cloud--$\beta$ level.  Panel B reports the descriptive fraction of cloud--$\beta$--$k$ configurations for which the full validation-mesh envelope width satisfies $W_{\max}\le0.10\,r_{\rm eq}$.  Repeated $\beta/q_{\rm vote}$ configurations within a cloud are not independent observations.
}
\label{fig:empirical_certificate}
\end{figure}

For the frozen 4,096-direction phase-A mesh, the radial brackets are $\widehat R_i\pm e_i$, and we write the finite-grid lower proxy as
\[
L_{\rm sec}^{-}
=
\max_{i\ne j}
\frac{\max\{0,|\widehat R_i-\widehat R_j|-e_i-e_j\}}
{\|u_i-u_j\|}.
\]
Its explicit code-oriented label is $L_{\mathrm{sec,all,phaseA}}^{-}$.  This is a finite-grid lower proxy, not the true Lipschitz constant.  The median certificate-to-finite-grid-lower-proxy ratio was $108.8$ for $L_{\rm old}$ and $105.9$ for $\widetilde L$.  Substantial residual slack remains relative to the finite-grid lower proxy.  The focuswise tightening therefore removes only part of the gap visible against this numerical lower diagnostic.

Figure~\ref{fig:empirical_nonconvexity} reports robust-witness detection rates over the prespecified grid.  Each fixed $\beta/q_{\rm vote}$ cell contains exactly 160 independent clouds, and the cellwise detection rates range from $40.0\%$ to $81.2\%$.  At the independent-cloud level, $155/160$ clouds yielded at least one robust witness somewhere among the nine prespecified $\beta/q_{\rm vote}$ configurations.  \texttt{NONE\_DETECTED} is not a proof of convexity.

\begin{figure}[tbp]
\centering
\includegraphics[width=\textwidth]{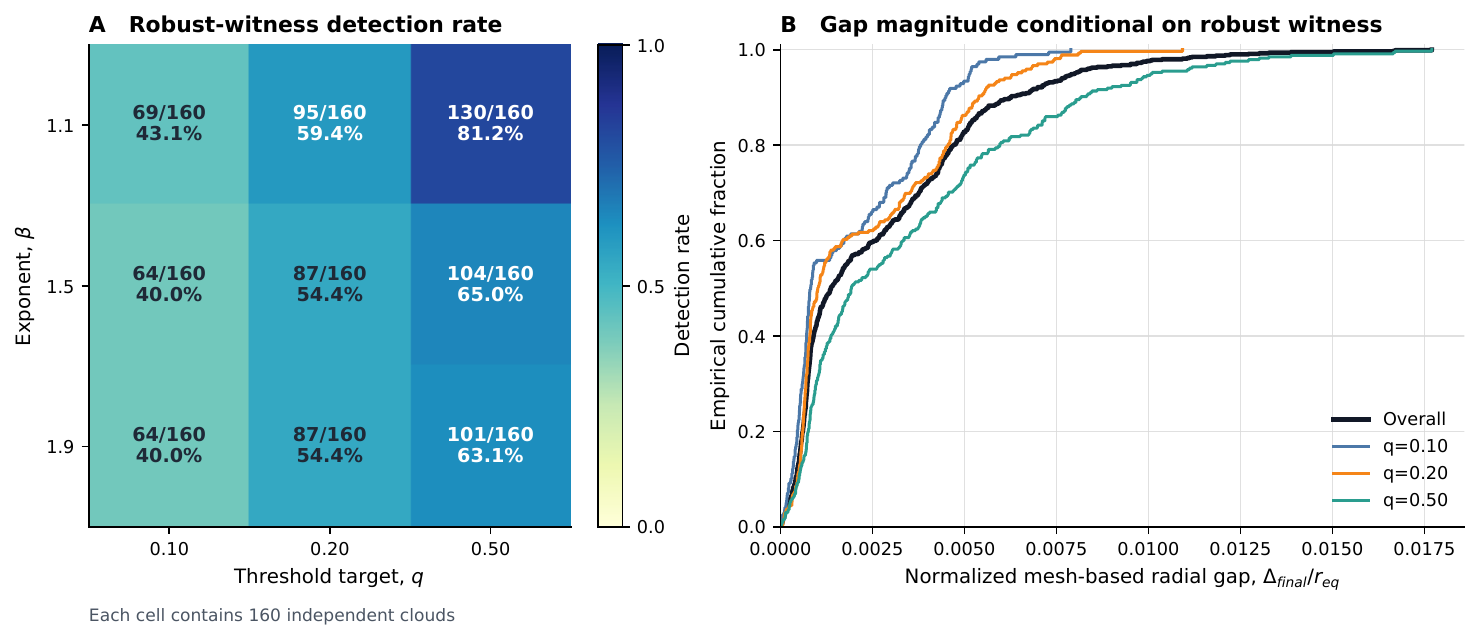}
\caption{%
Nonconvexity diagnostics in the staged two-dimensional study.  Panel A reports robust-witness detection rates, with 160 independent clouds in each fixed $\beta/q_{\rm vote}$ cell.  Panel B gives the empirical CDF of the normalized mesh-based radial convexification-gap approximation conditional on a robust witness.  \texttt{NONE\_DETECTED} does not certify convexity.
}
\label{fig:empirical_nonconvexity}
\end{figure}

For the reported magnitude, form the metric hull
\[
H_M=\operatorname{conv}\{c+\widehat R_i u_i\}
\]
and define the mesh-based radial convexification-gap approximation
\[
\widehat\Delta_{\rm rad}
=
\max_u\left[R_{H_M}(u)-\widehat R(u)\right]_+,
\]
reported after normalization by $r_{\rm eq}$.  Conditional on \texttt{ROBUST\_WITNESS}, this normalized approximation had median $0.00137$, 90th percentile $0.00626$, and maximum $0.01770$.  These summaries use the final gap convention, including the resolved 16,384-direction values for the three cases that triggered the prespecified 16,384-direction fallback rule.  In this staged numerical study, nonconvexity was frequently detectable, while its normalized radial magnitude was typically small.

\section{Computational Scope and Endpoint Limitations}
\label{sec:limitations}

Corollary~\ref{cor:certified-radial} reduces global certification to two ingredients: certified values on a finite directional set and a sphere-covering radius.  For chord-metric covering of $\mathbb S^{d-1}$, the fixed-dimension small-radius covering law is
\[
N_d(h)=\Theta_d\!\left(h^{-(d-1)}\right),
\qquad h\downarrow0,
\]
consistent with standard spherical covering and packing exponents \citep{bhattacharya_freund_jaiswal_2022_kmeans}.  If a target radial-envelope width $\tau$ satisfies $\tau>\delta$ and $\widetilde L>0$, the deterministic width bound
\[
\delta+2\widetilde Lh\le\tau
\]
is guaranteed whenever
\[
h\le\frac{\tau-\delta}{2\widetilde L}.
\]
Consequently, for fixed $d$ and fine resolution,
\[
N_d
=
\Theta_d\!\left[
\left(\frac{2\widetilde L}{\tau-\delta}\right)^{d-1}
\right].
\]
If $\widetilde L=0$, the certified width is already at most $\delta$ and no directional covering restriction is needed for this bound.
This fixed-$d$ notation should not be read as a dimension-growth statement.  In a regime with fixed small $h$, the corresponding uniform metric-entropy form is
\[
\log N_d(h)
=
(d-1)\log(1/h)+O(d).
\]
The directional burden therefore supports a deliberately limited practical scope: low-dimensional multivariate outputs.  No claim of high-dimensional scalability, dimension-independent certification, or runtime improvement is made.

The $\beta=1$ endpoint separates structural validity from unconditional certification.  Corollary~\ref{cor:power} and Theorem~\ref{thm:common-center} still give exact common-center star-shaped geometry for the power-distance score, but the global radial function need not be continuous, let alone Lipschitz.  Appendix A gives the exact segment example.  Thus Corollary~\ref{cor:certified-radial} is stated unconditionally only for $1<\beta<2$.  At the other elementary degeneracy, $m=1$ gives $I_1=\mathbb R^d$, so finite power-distance radial reconstruction begins at $m\ge2$.

The restriction $\beta<2$ in the certified statement also aligns the methodological range with strict propriety of the conventional energy score, but the roles should not be conflated.  Convex score geometry already holds for every $\beta\ge1$, including $\beta\ge2$; the scoring-rule interpretation has the separate propriety boundaries summarized in Corollary~\ref{cor:power}.  The note keeps these deterministic and distributional statements modular.

The numerical assessment is limited to $d=2$, four prespecified DGPs, and a finite $\beta/q_{\rm vote}$ grid; moreover, \texttt{NONE\_DETECTED} does not certify convexity, and the finite-grid secant statistic is only a lower proxy rather than the true Lipschitz constant.

\section{Discussion}
\label{sec:discussion}

The counterexample in Section~\ref{sec:score-geometry} shows first that candidate-score convexity alone does not explain connected FullCP geometry.  The score calculation then identifies the stronger structure available for the empirical energy-form construction.  With the empirical V-statistic normalization fixed at $1/(2m^2)$, symmetry and a constant diagonal turn each leave-one-out comparison into an exact pairwise-dissimilarity sublevel condition.  A diagonal lower bound and an attained Fr\'echet-type minimum then put the same point in every comparison region.  When those regions are convex, the strict full-conformal vote threshold preserves every segment from that common point, producing an exact star-shaped aggregate without requiring the aggregate itself to be convex.  In the univariate $\beta=1$ CRPS specialization, this proves the exact single-interval property for Toccaceli's fixed-bin transductive construction; the mechanism is the energy-form comparison identity and common-center geometry, rather than candidate-score convexity alone.

For power distances, the deterministic geometry applies for $\beta\ge1$.  On $1<\beta<2$ with $m\ge2$, the score-specific sublevel sets admit the explicit supporting-ball radius $\widetilde M_j/\widetilde\lambda_j$ and radial Lipschitz bound $2\widetilde M_j/\widetilde\lambda_j$, with the singleton cases handled separately.  The exact radial order-statistic representation then connects these bounds to existing directional root search, while classical McShane--Whitney extension machinery yields the global radial sandwich; the same-ray argument converts radial error to Hausdorff error.  These downstream tools are used as inherited machinery rather than claimed as new general approximation theory.

Deterministically, $\widetilde L\le L_{\rm old}$; empirically, the reduction was typically moderate in the frozen study, and substantial residual slack remained relative to the finite-grid lower proxy.  Robust witnesses of nonconvexity were frequently detected but typically shallow in the simulated designs, a descriptive observation that does not establish typical behavior for FullCP problems in general.

The endpoint $\beta=1$ shows why the structural and certification statements should remain separate: exact common-center star-shapedness survives, while unconditional global Lipschitz radial control can fail.  The sphere-covering requirement imposes a separate dimensional limitation even in the regular range.  Accordingly, the intended use is a score-specific geometric and certification tool for low-dimensional multivariate outputs, with no claim of generic FullCP geometry, universal superiority over convex reconstruction, or computational speed advantage.

\appendix

\section{Exact \texorpdfstring{$\beta=1$}{beta=1} Segment Pathology}

Let $d\ge2$, $m=3$, and
\[
y_1=-e_1,
\qquad
y_2=0,
\qquad
y_3=e_1,
\qquad
\beta=1.
\]
Take
\[
k=3,
\]
which is equivalent under the strict convention to
\[
\frac34\le\varepsilon<1.
\]
Since all three comparison votes are required,
\[
\Gamma^\varepsilon=I_1\cap I_2\cap I_3.
\]
The middle comparison region is
\[
I_2
=
\left\{
 t:\|t+e_1\|+\|t-e_1\|\le2
\right\}.
\]
By the triangle inequality,
\[
\|t+e_1\|+\|t-e_1\|
\ge\|2e_1\|=2,
\]
and equality holds precisely on the segment joining the two foci.  Hence
\[
I_2=[-e_1,e_1].
\]
For $t=se_1$ with $-1\le s\le1$,
\[
\|t\|+\|t-e_1\|\le3,
\qquad
\|t\|+\|t+e_1\|\le3,
\]
so the whole segment also lies in $I_1$ and $I_3$.  Therefore
\[
\Gamma^\varepsilon=[-e_1,e_1].
\]

The Fr\'echet objective is minimized at $c=0$.  About this common center,
\[
R_\varepsilon(e_1)=R_\varepsilon(-e_1)=1,
\]
whereas for every direction $u$ not collinear with $e_1$ the ray $\{ru:r>0\}$ leaves the segment immediately, so
\[
R_\varepsilon(u)=0.
\]
Thus the radial function is discontinuous on $\mathbb S^{d-1}$ when $d\ge2$.  Exact common-center star-shaped geometry at $\beta=1$ therefore does not imply an unconditional global Lipschitz radial certificate.

\section{A Three-Point Nonconvex Robustness Witness}

Consider
\[
d=2,
\qquad
m=3,
\qquad
\beta=\frac32,
\]
\[
y_1=(-1,0),
\qquad
y_2=(1,0),
\qquad
y_3=(0,1.5),
\]
and
\[
\varepsilon=0.6,
\qquad
k=2.
\]
A common Fr\'echet minimizer is
\[
c=(0,0.5247333004905787534\ldots).
\]
Let
\[
A=(0.58,-0.94),
\qquad
B=(-0.58,-0.94),
\qquad
M=(0,-0.94).
\]
For $G_j(t)=\sum_{i\ne j}\|t-y_i\|^{3/2}$ and the corresponding $d_j$, the resulting comparison margins and vote counts are
\[
\begin{array}{c|rrr|c}
 & d_1-G_1 & d_2-G_2 & d_3-G_3 & N(\cdot)\\
\hline
A&
+0.232492290498&
-1.215637558202&
+1.303611687945&
2\\
B&
-1.215637558202&
+0.232492290498&
+1.303611687945&
2\\
M&
-0.170268407563&
-0.170268407563&
+1.625412619066&
1
\end{array}
\]
Because $k=2$ and equality would count as a vote, these strict numerical margins imply
\[
A,B\in\Gamma^\varepsilon,
\qquad
M\notin\Gamma^\varepsilon.
\]
The witness therefore supplies an accepted-endpoints/rejected-midpoint nonconvexity check with nonzero margins.

\section{Technical Proof Details}

This appendix expands only the calculations used in Section 4.

\subsection*{Jensen containment and the radius $H_j$}

For $\beta\ge1$, the map $z\mapsto\|t-z\|^\beta$ is convex.  Applying Jensen's inequality to the $m-1$ points $\{y_i:i\ne j\}$ gives
\[
\frac1{m-1}\sum_{i\ne j}\|t-y_i\|^\beta
\ge
\left\|t-\frac1{m-1}\sum_{i\ne j}y_i\right\|^\beta
=
\|t-\mu_{-j}\|^\beta.
\]
If $t\in I_j$, then $G_j(t)\le d_j$, so
\[
\|t-\mu_{-j}\|
\le
\left(\frac{d_j}{m-1}\right)^{1/\beta}
=r_j.
\]
Thus $I_j\subseteq B(\mu_{-j},r_j)$.  Since $c\in I_j$, every $t\in I_j$ also satisfies
\[
\|t-c\|
\le
\|t-\mu_{-j}\|+\|\mu_{-j}-c\|
\le
r_j+\|c-\mu_{-j}\|
=H_j.
\]
If $H_j=0$, then both nonnegative summands in its definition vanish, $r_j=0$, and the Jensen ball reduces to $\{c\}$, proving $I_j=\{c\}$.

If $d_j>0$, then for every $t\in I_j$ and $i\ne j$, the same containment and the triangle inequality give
\[
\|t-y_i\|
\le
r_j+\|\mu_{-j}-y_i\|.
\]
Also, the $i$th nonnegative summand in $G_j(t)\le d_j$ gives
\[
\|t-y_i\|\le d_j^{1/\beta}.
\]
Thus $\|t-y_i\|\le U_{ij}$.  Both branches defining $U_{ij}$ are finite and strictly positive when $d_j>0$.

\subsection*{Focuswise derivative bounds with restricted-domain crossings}

Let $I_j$ be non-singleton, and let finite positive $V_{ij}$ bound $\|t-y_i\|$ for $t\in I_j$.  For $t\ne y_i$,
\[
\nabla\|t-y_i\|^\beta
=
\beta\|t-y_i\|^{\beta-2}(t-y_i),
\]
so
\[
\|\nabla\|t-y_i\|^\beta\|
=
\beta\|t-y_i\|^{\beta-1}.
\]
At a focus the gradient is zero by continuity.  Therefore, for every $t\in I_j$,
\[
\|\nabla G_j(t)\|
\le
\beta\sum_{i\ne j}V_{ij}^{\beta-1}
=M_j(V).
\]

Away from a focus, the Hessian is
\[
\nabla^2\|t-y_i\|^\beta
=
\beta\|t-y_i\|^{\beta-2}I
+
\beta(\beta-2)\|t-y_i\|^{\beta-4}(t-y_i)(t-y_i)^\top.
\]
Its eigenvalue in the radial direction is
\[
\beta(\beta-1)\|t-y_i\|^{\beta-2},
\]
and every orthogonal eigenvalue is
\[
\beta\|t-y_i\|^{\beta-2}.
\]
Hence the minimum eigenvalue is the former.  Since $\beta-2<0$ and $\|t-y_i\|\le V_{ij}$ on $I_j$,
\[
\nabla^2\|t-y_i\|^\beta
\succeq
\beta(\beta-1)V_{ij}^{\beta-2}I
\]
whenever $t\in I_j$ and $t\ne y_i$.

To justify the integrated inequality when a segment crosses one or more foci, take arbitrary $x,y\in I_j$ and write $z(s)=x+s(y-x)$ for $s\in[0,1]$.  Convexity of $I_j$ is essential here: it ensures that $z(s)\in I_j$ for the entire segment.  If $x=y$, the desired inequality is immediate.  Otherwise, a nonconstant line segment can meet each fixed observation at at most one parameter value, so the set of focus-crossing parameters is finite.  Partition $[0,1]$ at these values.  On every open subinterval free of crossings, the ordinary Hessian exists and, for
\[
\phi(s)=G_j(z(s)),
\]
one has
\[
\phi''(s)
=(y-x)^\top\nabla^2G_j(z(s))(y-x)
\ge
\lambda_j(V)\|y-x\|^2.
\]
The function $\|\cdot-y_i\|^\beta$ is $C^1$ for $\beta>1$, with gradient $0$ at the focus, so $\phi'$ is continuous across every partition point.  The second-derivative singularity has order $|s-s_0|^{\beta-2}$ and is locally integrable because $\beta-2>-1$.  Integrating the lower bound on the open pieces and summing across the crossings therefore gives
\[
\phi'(s)-\phi'(0)
\ge
\lambda_j(V)s\|y-x\|^2
\qquad(0\le s\le1),
\]
and a second integration yields
\[
\phi(1)
\ge
\phi(0)+\phi'(0)+\frac{\lambda_j(V)}{2}\|y-x\|^2.
\]
Since $\phi'(0)=\nabla G_j(x)^\top(y-x)$, this is precisely
\[
G_j(y)
\ge
G_j(x)
+
\nabla G_j(x)^\top(y-x)
+
\frac{\lambda_j(V)}{2}\|y-x\|^2.
\]
This argument avoids treating the classical Hessian as a finite matrix at $t=y_i$ and proves the inequality only for $x,y\in I_j$.  Taking $V_{ij}=U_{ij}$ gives $M_j(V)=\widetilde M_j$ and $\lambda_j(V)=\widetilde\lambda_j$.

\subsection*{Tangent-ball algebra and uniformization}

Let $x\in\partial I_j$ for a non-singleton $I_j$.  Strict convexity of $G_j$ implies that a non-singleton sublevel has nonempty interior.  The restricted strong-convexity inequality also shows that $g_x:=\|\nabla G_j(x)\|>0$: if $\nabla G_j(x)=0$, applying it to any distinct $y\in I_j$ would give $G_j(y)>G_j(x)=d_j$, a contradiction.  Set
\[
p_x=\nabla G_j(x)/g_x.
\]
For $y\in I_j$, the strong-convexity inequality and $G_j(y)\le G_j(x)$ imply
\[
g_x p_x^\top(y-x)+\frac{\lambda_j(V)}{2}\|y-x\|^2\le0.
\]
Multiplying by $2/\lambda_j(V)$ and completing the square gives
\begin{align*}
\|y-x\|^2
+2\frac{g_x}{\lambda_j(V)}p_x^\top(y-x)
&\le0,\\
\left\|y-x+\frac{g_x}{\lambda_j(V)}p_x\right\|^2
&\le
\left(\frac{g_x}{\lambda_j(V)}\right)^2.
\end{align*}
Thus
\[
I_j
\subset
B\left(x-\frac{g_x}{\lambda_j(V)}p_x,\frac{g_x}{\lambda_j(V)}\right).
\]
Writing $r=g_x/\lambda_j(V)$ and $R=M_j(V)/\lambda_j(V)$, one has $r\le R$.  The centers of
\[
B(x-rp_x,r)
\quad\text{and}\quad
B(x-Rp_x,R)
\]
are separated by $R-r$, so every point of the first ball lies within $(R-r)+r=R$ of the second center.  Hence
\[
B(x-rp_x,r)\subseteq B(x-Rp_x,R),
\]
which proves the uniformized radius.

For completeness, the normal-cone identity used in the main text follows from the standard convex differentiable sublevel rule.  The nonempty interior supplies strict feasibility for the sublevel, and the boundary gradient is nonzero, so
\[
N_{I_j}(x)=\{\tau\nabla G_j(x):\tau\ge0\}.
\]

The exact uniform-radius intersection statement used in the main proof follows from metric projection.  Let
\[
R=\frac{M_j(V)}{\lambda_j(V)}
\]
and consider the family
\[
\mathcal B_j
=
\{B(x-Rp_x,R):x\in\partial I_j\}.
\]
The tangent-ball calculation above gives
\[
I_j\subseteq\bigcap_{B\in\mathcal B_j}B.
\]
Conversely, for $z\notin I_j$, let $x=\Pi_{I_j}(z)$.  Then
\[
\frac{z-x}{\|z-x\|}\in N_{I_j}(x),
\]
and hence equals $p_x$ by the normal-cone identity after normalization.  The corresponding ball belongs to $\mathcal B_j$ but excludes $z$, because
\[
\left\|z-(x-Rp_x)\right\|
=
R+\|z-x\|
>
R.
\]
Thus
\[
I_j=\bigcap_{B\in\mathcal B_j}B.
\]

\subsection*{The factor-$2$ radial Lipschitz estimate}

Translate the common radial center to the origin.  Consider a ball $B(a,R)$ with $\|a\|\le R$, so the origin lies in the ball.  A point $ru$ on the positive ray is on the boundary when
\[
\|ru-a\|^2=R^2,
\]
or
\[
r^2-2r(a^\top u)+\|a\|^2-R^2=0.
\]
The nonnegative exit root is
\[
r_a(u)
=
a^\top u+\sqrt{(a^\top u)^2+R^2-\|a\|^2}.
\]
Let $C=R^2-\|a\|^2$.  For $C>0$, the scalar function
\[
\psi(s)=s+\sqrt{s^2+C}
\]
has derivative
\[
\psi'(s)=1+\frac{s}{\sqrt{s^2+C}}\in(0,2).
\]
For $C=0$, $\psi(s)=s+|s|$ is globally $2$-Lipschitz.  Therefore
\[
|r_a(u)-r_a(v)|
\le2|a^\top(u-v)|
\le2\|a\|\,\|u-v\|
\le2R\|u-v\|.
\]
If $\{f_\xi\}$ are all $K$-Lipschitz, let
\[
g(u)=\inf_\xi f_\xi(u).
\]
For any $\eta>0$, choose $\xi_\eta$ such that
\[
f_{\xi_\eta}(v)\le g(v)+\eta.
\]
Then
\[
g(u)-g(v)
\le
f_{\xi_\eta}(u)-f_{\xi_\eta}(v)+\eta
\le
K\|u-v\|+\eta.
\]
Letting $\eta\downarrow0$ and then exchanging $u$ and $v$ gives
\[
|g(u)-g(v)|\le K\|u-v\|.
\]
Hence an arbitrary infimum of functions sharing the same Lipschitz constant retains that constant.  By the exact supporting-ball representation established above and in the main proof, $R_j$ is the infimum of the corresponding radius-$R$ ball radial exits.  Applying the preceding Lipschitz bound therefore gives
\[
\operatorname{Lip}(R_j)
\le2R
=\frac{2M_j(V)}{\lambda_j(V)}.
\]
For $d_j>0$, taking $V_{ij}=U_{ij}$ gives the tightened bound $\operatorname{Lip}(R_j)\le\widetilde\ell_j$.  Singleton comparison sets use the separate constant-zero radial argument instead.

\subsection*{Order-statistic, interval-envelope, and Hausdorff calculations}

Let $a,b\in\mathbb R^m$ satisfy $\|a-b\|_\infty\le\eta$.  Then $a_j\le b_j+\eta$ for all $j$.  If $a_{(r)}>b_{(r)}+\eta$, at least $m-r+1$ coordinates of $a$ would exceed $b_{(r)}+\eta$, forcing the corresponding coordinates of $b$ to exceed $b_{(r)}$, a contradiction.  Thus $a_{(r)}\le b_{(r)}+\eta$; exchanging $a,b$ yields
\[
|a_{(r)}-b_{(r)}|\le\eta.
\]
This proves the $\ell_\infty$ stability used for $R_\varepsilon$.

For the interval envelopes, Lipschitzness and the sampled bounds give, for every $\ell$,
\[
a_\ell-\widetilde L\|u-u_\ell\|
\le
R_\varepsilon(u)
\le
b_\ell+\widetilde L\|u-u_\ell\|.
\]
Taking the maximum of the lower bounds together with the intrinsic constraint $R_\varepsilon\ge0$, and the minimum of the upper bounds, gives $\underline R\le R_\varepsilon\le\overline R$.  Choosing a nearest sampled direction and splitting according to whether $a_\ell-\widetilde L\|u-u_\ell\|$ is nonnegative gives
\[
\overline R(u)-\underline R(u)
\le\delta+2\widetilde Lh_{\mathcal U},
\]
as in the main proof.

Finally, let $A$ and $B$ be same-center radial sets with radial functions $R_A\le R_B$.  Every $x=c+ru\in B$ is within
\[
\max\{0,r-R_A(u)\}
\le
R_B(u)-R_A(u)
\]
of the point $c+\min\{r,R_A(u)\}u\in A$.  Since $A\subseteq B$, the reverse directed distance is zero, and therefore
\[
d_H(A,B)
\le
\|R_B-R_A\|_\infty.
\]
Applying this elementary same-ray bound to $(\underline\Gamma,\Gamma^\varepsilon)$, $(\Gamma^\varepsilon,\overline\Gamma)$, and $(\underline\Gamma,\overline\Gamma)$ gives all three Hausdorff inequalities in Corollary~\ref{cor:certified-radial}.

\bibliographystyle{plainnat}
\bibliography{references}

\end{document}